\newtheorem{theorem}{Theorem}
\newtheorem{lemma}{Lemma}
\newtheorem{corollary}{Corollary}
\newtheorem{definition}{Definition}
\newtheorem*{remark}{Remark}
\newtheorem*{goal}{Goal}
\newcommand{\tf}[1]{\mathbf{#1}}
\newcommand{\ttf}[1]{\boldsymbol{#1}}
\newcommand{\SigSp}[1]{\mathcal{#1}}
\title{Verification of Neural Network Control Policy Under Persistent Adversarial Perturbation}
\author{%
  Yuh-Shyang Wang \\
  GE Research\\
  \texttt{yuh-shyang.wang@ge.com} \\
  \And
  Tsui-Wei Weng \\
  MIT \\ 
  \texttt{twweng@mit.edu} \\
  \And
  Luca Daniel \\
  MIT \\
  \texttt{luca@mit.edu} \\  
}
\begin{document}

\maketitle

\begin{abstract}
Deep neural networks are known to be fragile to small adversarial perturbations. This issue becomes more critical when a neural network is interconnected with a physical system in a closed loop. In this paper, we show how to combine recent works on neural network certification tools (which are mainly used in static settings such as image classification) with robust control theory to certify a neural network policy in a control loop. Specifically, we give a sufficient condition and an algorithm to ensure that the closed loop state and control constraints are satisfied when the persistent adversarial perturbation is $\ell_\infty$ norm bounded. Our method is based on finding a positively invariant set of the closed loop dynamical system, and thus we do not require the differentiability or the continuity of the neural network policy. Along with the verification result, we also develop an effective attack strategy for neural network control systems that outperforms exhaustive Monte-Carlo search significantly. We show that our certification algorithm works well on learned models and achieves $5$ times better result than the traditional Lipschitz-based method to certify the robustness of a neural network policy on a cart pole control problem.
\end{abstract}

\section{Introduction} \label{submission}

Deep neural networks (DNN) have become state-of-the-arts in a variety of machine learning tasks, including control of a physical system in the reinforcement learning setting. For example, in guided policy search~\cite{zhang2016learning}, a neural network policy is used to replace the online model predictive control policy to directly control a physical system in a feedback loop. Yet, recent studies demonstrate that neural networks are surprisingly fragile - the neural network policies~\cite{huang2017adversarial}, neural network classifiers~\cite{szegedy2013intriguing} and many other tasks~\cite{xie2017adversarial, jia2017adversarial, cisse2017houdini} are all vulnerable to adversarial examples and attacks. For applications that are safety-critical, such as self-driving cars, the existence of adversarial examples in neural networks has raised severe and unprecedented concerns due to recent popularity on deploying neural network based policies in various physical systems. A small perturbation on the input of the neural network may cause significant change in control actions, which could possibly destabilize the system or drive the system state to an unsafe region. Therefore, it is crucial to develop a verification tool that can provide useful certificate (such as safety or robustness guarantees) for a dynamical system with a neural network policy in the loop.

Recently, much research effort has been devoted to developing verification methods to quantify the robustness of neural networks against adversarial input perturbations. The key idea is to characterize the set of neural network output when the adversarial perturbations are constrained in a given set, usually a weighted $\ell_p$ ball. Current mainstream verification methods can be categorized to exact verifiers~\cite{katz2017reluplex} and inexact verifiers~\cite{kolter2017provable,weng2018towards,Gehr2018AI2,dvijotham2018dual,Boopathy2019cnncert, royo2019fast} based on the characterization of output reachable set being \textit{exact}  or \textit{over-approximated}. Exact verifiers are hard to scale to large and deep neural networks due to its intrinsic NP-completeness~\cite{katz2017reluplex}; hence often inexact verifiers are more favorable in real neural network applications due to their scalability and capability to deliver non-trivial robustness certificate efficiently. Nevertheless, all the above verifiers are developed in a \textit{static} setting, specifically for neural network classifiers, where the neural networks do not interact with a dynamical system. 
As neural network policies are deployed in many real-world systems~\cite{zhang2016learning, kiumarsi2017optimal}, it is necessary to extend the \emph{static} neural network certification tool to a \emph{dynamic} setting, so one can certify a neural network policy in a feedback control loop.



In this paper, we propose a novel framework to verify neural network policies in a closed loop system by combining the aforementioned static neural network certification tools with robust control theory. We consider a realistic and strong threat model where the adversaries can manipulate observations and states at \textit{every time step over an infinite horizon} (hence the so-called \textit{persistent} perturbations). Our key idea is to leverage the static neural network certification tools to give a tight input-output characterization of the neural network policy, that when combine with robust control theory, allow us to find a positively invariant set of the closed loop dynamical system. Our contributions are summarized as below.
\paragraph{Contributions.}
\begin{itemize}[leftmargin=*]
\item To our best knowledge, our work is the first one to extend the neural network certification tools~\cite{katz2017reluplex, kolter2017provable,weng2018towards,Gehr2018AI2,dvijotham2018dual,Boopathy2019cnncert, royo2019fast} to a dynamic setting, in which we certify a neural network policy in a feedback control loop under persistent adversarial perturbation.
\item Our framework can handle non-Lipschitz and discontinuous neural network policies, while the traditional Lipschitz-based robust control approach cannot. Even when the neural network policy is Lipschitz continuous, we prove theoretically and validate experimentally that our proposed framework is \emph{always} better than the traditional Lipschitz-based robust control approach in terms of tighter certification bound.
\item We demonstrate that our method works well on situations where the system dynamic is unknown, unstable, and nonlinear. As long as one can learn the model and \emph{over-approximate} the modeling error using a data driven approach, our certification algorithm can be applied.
\item Along with the certification algorithm, we develop an effective persistent $\ell_\infty$ attack algorithm that can successfully discover the vulnerability of a neural network control system and show that such vulnerability cannot be found by exhaustive Monte-Carlo simulation. This observation indicates that the robustness of a neural network control system cannot be certified via exhaustive simulation, and the mathematical-based certification framework developed in this paper is necessary.
\end{itemize}






\paragraph{Outline.} We organize the paper as follows. We summarize related works and notations by the end of this section, and then describe problem setting and formulations in Section \ref{sec:formulation}. We present our main theorems and algorithms in Sections \ref{sec:boundedness} and \ref{sec:find_invariant}, where we propose an algorithm to find a certificate to ensure that the closed loop state and control constraints are satisfied when the persistent adversarial attack signal is $\ell_\infty$ norm bounded. In Section \ref{sec:experiment}, we conduct a comprehensive case study and show that our proposed algorithms not only deliver much stronger certificates than traditional robust control approaches but also work well on both learned models and non-Lipschitz neural network policies.


\paragraph{Related works.} Our work is closely related to the neural network certification tools mentioned above. Specifically, given a neural network policy $u = \pi(y)$, these tools can certify that $u = \pi(y) \in \SigSp{U}$ for all $y \in \SigSp{Y}$ for some sets $\SigSp{U}$ and $\SigSp{Y}$. We extend these tools to a dynamic setting where the neural network output $u$ can affect the neural network input $y$ in the future via a feedback loop. Another closely related field is safe reinforcement learning~\cite{garcia2015comprehensive, chow2018lyapunov}, where the safety during policy exploration and deployment are investigated. Recent works in~\cite{berkenkamp2017safe, richards2018lyapunov, jin2018stability, jin2018control, ivanov2019verisig} can further give a certificate of stability or safety properties of a neural network control system.
Our work differs from the above works in two aspects. First, we explicitly incorporate the adversarial perturbation in our formulation, especially the \emph{persistent perturbation sequence} with bounded $\ell_\infty$ norm (the $\ell_2$ norm and the energy of the signal are unbounded). The Lyapunov based method~\cite{richards2018lyapunov} and the hybrid system method~\cite{ivanov2019verisig} do not deal with perturbation and the integral quadratic constraint based method~\cite{jin2018stability, jin2018control} is developed with bounded $\ell_2$ norm on the perturbation. The second difference is that our method does not require Lipschitz continuity on the neural network policy, a common assumption in most of the literature. Therefore, our method can be applied on neural networks that are discontinuous or non-differentiable in nature -- due to quantization, digitalization, switching logic, obfuscated gradient~\cite{athalye2018obfuscated}, or other defense strategies. Even when the neural network policy is Lipschitz continuous, we prove theoretically and validate experimentally that our method can give a tighter bound than the Lipschitz-based robust control method. Another relevant work is the reachability analysis of a neural network control system over a finite time horizon~\cite{xiang2018reachability}. In this paper, we derive safety guarantee over an infinite time horizon with tighter bounds.




\paragraph{Notations.} We use lower case letters such as $x$ to denote vectors and upper case letters such as $A$ to denote matrices. We use $x \preceq y$ to denote that $x$ is element-wisely less than or equal to $y$. For a square matrix $A$, we use $\rho(A)$ to denote its spectral radius, which is the largest absolute value of the eigenvalues of $A$. We use boldface letters such as $\tf x$ and $\tf{\Phi}$ to denote signals and transfer matrices in the frequency domain, respectively. Consider a time series signal $\{x[t]\}_{t=0}^\infty$, the unilateral $z$-transform of the time series $\{x[t]\}_{t=0}^\infty$ is given by $\tf x (z) = \sum_{t=0}^\infty z^{-t} x[t]$. We use $\Phi[t]$ to denote the $t$-th spectral component of a transfer matrix $\tf{\Phi}$, i.e., $\tf{\Phi}(z) = \sum_{t=0}^\infty z^{-t} \Phi[t]$. We use $\tf x$ and $\tf{\Phi}$ as shorthand of $\tf x(z)$ and $\tf{\Phi}(z)$ when the context makes it clear. 
For a frequency domain equation $\tf y = \tf{\Phi} \tf x$, the corresponding time domain equation is given by the convolution formula $y[t] = \sum_{\tau=0}^\infty \Phi[\tau] x[t-\tau]$. We use $\| \tf x \|_{\ell_p}$ to denote the standard $\ell_p$ norm of the signal $\tf x$ and use $\| \tf{\Phi} \|_{\mathcal{L}_1}$ and $\| \tf{\Phi} \|_{\mathcal{H}_\infty}$ to denote the $\mathcal{L}_1$ norm~\cite{dahleh19871} and the $\mathcal{H}_\infty$ norm~\cite{zhou1996robust} of the system $\tf{\Phi}$. We define the absolute operator $\text{abs}(\cdot)$ of a stable transfer matrix $\tf{\Phi}$ by $\text{abs}(\tf{\Phi}) = \sum_{t=0}^\infty | \Phi[t] |$. Note that each element of the transfer matrix $\tf{\Phi}$ is absolutely summable if and only if the transfer matrix $\tf{\Phi}$ is real rational and stable (see Page 113 - 114 of \cite{oppenheim1996signals}). 

\section{Problem Formulation} \label{sec:formulation}
\begin{figure}[h!]
      \centering
      \includegraphics[width=0.7\textwidth]{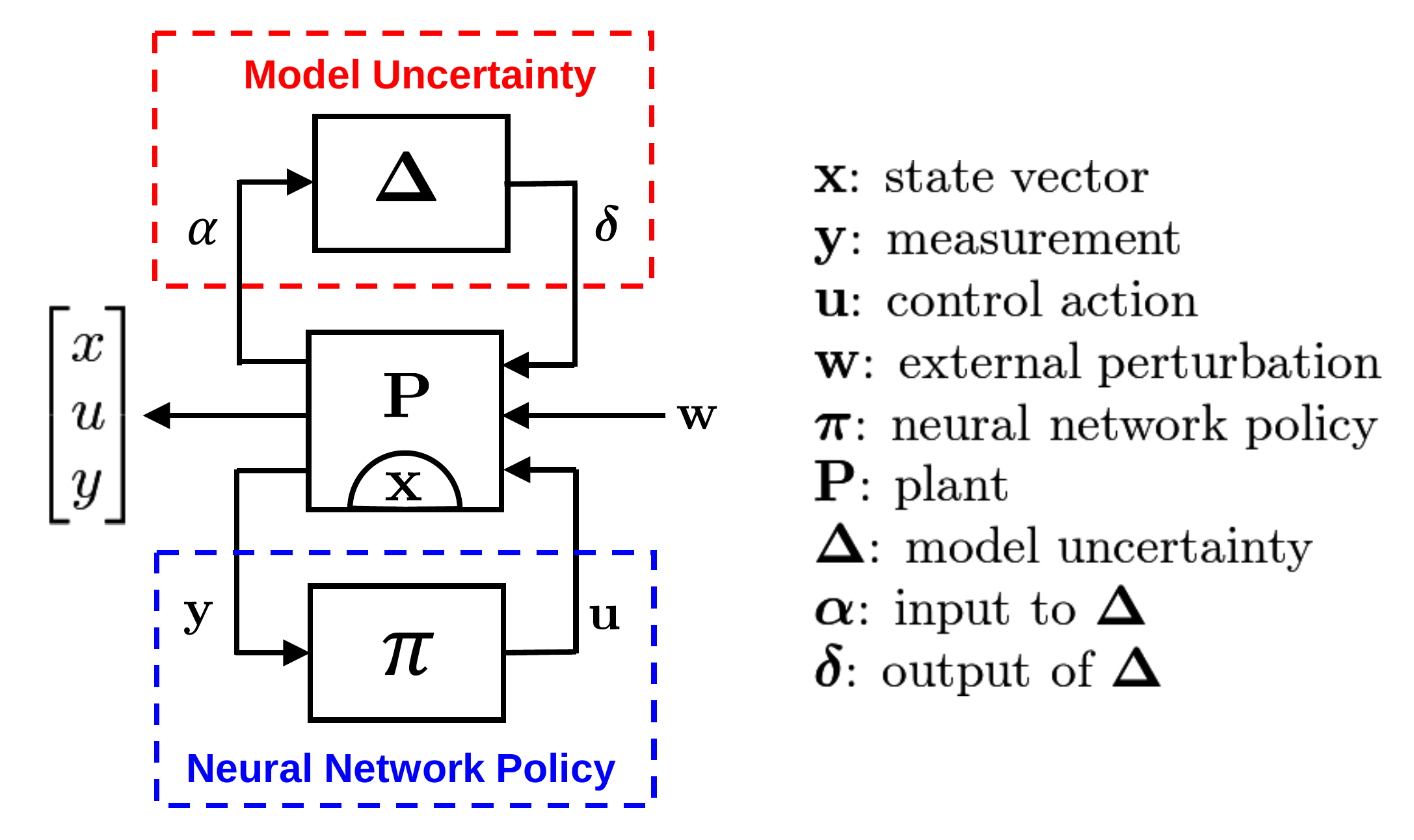}
      \caption{A neural network policy interconnected with an uncertain dynamical system.}
      \label{fig:model}
\end{figure}
We consider a neural network policy interconnected with a dynamical system. 
The model architecture is shown in Figure \ref{fig:model}, where $\pi(\cdot)$ is the neural network control policy, $\tf{P}$ is the plant to be controlled, and $\ttf{\Delta}$ characterizes the uncertainty of the model $\tf{P}$. When the model dynamics is unknown in the first place, one can use data driven approaches such as~\cite{dean2017sample, recht2019tour} to learn the nominal model $\tf{P}$ and over-approximate the modeling error using $\ttf{\Delta}$. We assume that $\tf{P}$ is a discrete time linear time invariant (LTI) system with dynamics given by
\begin{eqnarray}
x[t+1] &=& A x[t] + B u[t] + B_w w[t] + B_{\delta} \delta[t] \label{eq:NLTI1}\\
y[t] &=& C x[t] + D_w w[t] \label{eq:NLTI2} \\
\alpha[t] &=& C_{\alpha} x[t] + D_{\alpha u} u[t] + D_{\alpha w} w[t]. \label{eq:NLTI3}
\end{eqnarray}
where $t$ denotes the time index, $x$ the state vector, $u$ the control action, $y$ the measurement, $w$ the external perturbation, and $\alpha$ and $\delta$ the input and output of the uncertainty block $\ttf{\Delta}$. In particular, $w$ is a persistent perturbation over an infinite horizon $t \geq 0$. In the rest of the paper, we assume zero initial condition unless stated otherwise:
\begin{equation}
    x[0] = 0. \label{eq:zero}
\end{equation}
This is to simplify the presentation -- the method proposed in this paper can be readily extended to handle persistent perturbation with nonzero initial condition as well. We assume the pair $(A,B)$ is stabilizable and $(A,C)$ is detectable. 
The neural network policy is described by
\begin{equation}
    u[t] = \pi(y[t]), \label{eq:NLTI4}
\end{equation}
which is assumed to be a static policy in this paper. The uncertainty block can be dynamic, with 
\begin{equation}
    \ttf{\delta} = \ttf{\Delta}(\ttf{\alpha}) \label{eq:NLTI5}
\end{equation}
in the frequency domain. We assume that $\ttf{\Delta}$ is stable and norm bounded. Equations \eqref{eq:NLTI1} - \eqref{eq:NLTI5} give a complete description of the model architecture shown in Figure \ref{fig:model}. 

\begin{remark}
If the input to the neural network policy is a finite horizon of historical measurement, i.e., $u[t] = \pi(y[t], y[t-1],\dots,y[t-T])$ for a finite length $T$, then we can augment the state $x$ and measurement $y$ for $T$ steps and convert $\pi(\cdot)$ to a static policy.
\end{remark}
\begin{remark}
Note that $y$ in \eqref{eq:NLTI2} is not a direct function of $u$ and $\delta$, and $\alpha$ in \eqref{eq:NLTI3} is not a direct function of $\delta$. This assumption ensures that the feedback structure in Figure \ref{fig:model} is well-posed because there is no algebraic loop in the equation. 
\end{remark}

In this paper, we use $\tf w$ to model the persistent adversarial perturbation.
We assume that $\tf w$ lies in the set $\|\tf w\|_{\ell_\infty} \leq w_\infty$, or equivalently,
\begin{equation}
    \tf w \in \{ \tf w \,\, | \,\, |w[t]| \preceq w_\infty \mathbf{1} = \bar{w}, \,\, \forall t \geq 0 \}. \label{eq:attack}
\end{equation}
The matrices $B_w$ and $D_w$ in \eqref{eq:NLTI1} - \eqref{eq:NLTI2} control the impact of $\tf w$ on the state $\tf x$ and the neural network input $\tf y$.
We note that the block $\ttf{\Delta}$ can be used to model another type of adversarial perturbation, such as changing the physical parameters of the model. 
The goal of this paper is the following: 
\begin{goal}
Given the model equations \eqref{eq:NLTI1} - \eqref{eq:NLTI5}, design an algorithm to certify that the following requirement is satisfied for any adversarial sequence $\{w[t]\}_{t=0}^\infty$ satisfying \eqref{eq:attack}: 
\begin{equation}
    |x[t]| \preceq x_{lim}, \quad |y[t]| \preceq y_{lim}, \quad |u[t]| \preceq u_{lim}, \quad \forall t \geq 0. \label{eq:goal}
\end{equation}
\end{goal}

\section{Closed Loop Boundedness} \label{sec:boundedness}

In this section, we derive a sufficient condition to ensure that the state $\tf x$, measurement $\tf y$, and control $\tf u$ in Figure \ref{fig:model} are bounded for any adversarial attack $\tf w$ lies within the $\ell_\infty$ ball \eqref{eq:attack}. 
In Section \ref{sec:open_stable}, we assume that the plant model \eqref{eq:NLTI1} - \eqref{eq:NLTI3} is open loop stable, i.e., $\rho(A) < 1$, and directly apply traditional robust control theory to obtain Lemma \ref{lemma:L1} for closed loop stability. We then explain the restrictions of Lemma \ref{lemma:L1} and improve the results by combining robust control theory with neural network certification tools to obtain Theorem \ref{thm:LTI} in Section \ref{sec:main_thm}. Finally, we extend our results to unstable plants in Section \ref{sec:open_unstable}.


\subsection{Stable Plant: robust control baseline} \label{sec:open_stable}
With the zero initial condition assumption, we can use the frequency domain notation introduced before to rewrite \eqref{eq:NLTI1} - \eqref{eq:NLTI3} as
\begin{subequations}
\begin{align}
\tf{x} &= \tf{\Phi_{xu}} \tf u + \tf{\Phi_{xw}} \tf w + \tf{\Phi_{x \ttf{\delta}}} \ttf{\delta} \label{eq:FNLTI1}\\
\tf{y} &= \tf{\Phi_{yu}} \tf u + \tf{\Phi_{yw}} \tf w + \tf{\Phi_{y \ttf{\delta}}} \ttf{\delta} \label{eq:FNLTI2}\\
\ttf{\alpha} &= \tf{\Phi_{\ttf{\alpha} u}} \tf u + \tf{\Phi_{\ttf{\alpha} w}} \tf w + \tf{\Phi_{\ttf{\alpha} \ttf{\delta}}} \ttf{\delta}. \label{eq:FNLTI3}
\end{align}
\end{subequations}
with $\tf{\Phi_{xu}} = (zI-A)^{-1} B$, $\tf{\Phi_{xw}} = (zI-A)^{-1} B_w$, $\tf{\Phi_{x \ttf{\delta}}} = (zI-A)^{-1} B_\delta$, $\tf{\Phi_{yu}} = C (zI-A)^{-1} B$, $\tf{\Phi_{yw}} = C (zI-A)^{-1} B_w + D_w$,  $\tf{\Phi_{y \ttf{\delta}}} = C(zI-A)^{-1} B_\delta$, $\tf{\Phi_{\alpha u}} = C_{\alpha}(zI-A)^{-1} B + D_{\ttf{\alpha} u}$, $\tf{\Phi_{\ttf{\alpha} w}} = C_{\alpha}(zI-A)^{-1} B_w + D_{\alpha w}$, and $\tf{\Phi_{\ttf{\alpha} \ttf{\delta}}} = C_{\alpha}(zI-A)^{-1} B_\delta$. 

To show the input-output stability\footnote{The closed loop system is said to be finite gain input-output stable if the gain from perturbation $\tf w$ to $(\tf x, \tf u, \tf y)$ is finite.} of the structure in Figure \ref{fig:model}, we first assume that the neural network policy is locally Lipschitz continuous with a finite $\ell_\infty$ to $\ell_\infty$ gain $\gamma_\pi$, i.e., $\|u \|_{\ell_\infty} \leq \gamma_\pi \|y \|_{\ell_\infty}$ over some range $\|y\|_{\ell_\infty} \leq y_\infty$. For a stable transfer matrix, the $\ell_\infty$ to $\ell_\infty$ induced norm is known as the $\mathcal{L}_1$ system norm~\cite{dahleh19871}, which is defined by
\begin{equation}
    \|\tf{G}\|_{\mathcal{L}_1} = \underset{i}{\text{max}} \sum_{j=1}^n \|\tf{g_{ij}}\|_1 = \underset{i}{\text{max}} \sum_{j=1}^n \sum_{t=0}^\infty |g_{ij}[t]| \nonumber
\end{equation}
where $\tf{g_{ij}}$ is the $(i,j)$-th entry of the transfer matrix $\tf{G}$ and $n$ is the number of column of $\tf{G}$. 
The following Lemma gives the local input-output stability of Figure \ref{fig:model}, which directly comes from robust control theory:
\begin{lemma} \label{lemma:L1}
Consider a stable LTI plant \eqref{eq:NLTI1} - \eqref{eq:zero} interconnected with a neural network policy \eqref{eq:NLTI4} and a dynamic uncertainty block \eqref{eq:NLTI5} as shown in Figure \ref{fig:model}. Assume that the persistent perturbation $\tf w$ lies in the set given by \eqref{eq:attack}.
Suppose that the neural network policy $u = \pi(y)$ has a finite $\ell_\infty$ to $\ell_\infty$ gain $\gamma_\pi$ for all $\|y\|_{\ell_\infty} \leq y_\infty$, and the uncertainty block $\ttf{\Delta}$ has the property $\| \ttf{\delta} \|_{\ell_\infty} \leq \gamma_\Delta \|\ttf{\alpha} \|_{\ell_\infty}$. If the following three conditions hold:
\begin{subequations}
\begin{align}
& \beta_1 = \gamma_{\Delta} \| \tf{\Phi_{\ttf{\alpha} \ttf{\delta}}} \|_{\mathcal{L}_1} < 1 \label{eq:L1-1}\\
& \beta_2 = \gamma_\pi \Big[\|\tf{\Phi_{yu}}\|_{\mathcal{L}_1} + \frac{\gamma_{\Delta}}{1 - \beta_1} \| \tf{\Phi_{y \ttf{\delta}}}\|_{\mathcal{L}_1} \|\tf{\Phi_{\ttf{\alpha} u}}\|_{\mathcal{L}_1} \Big] < 1 \label{eq:L1-2} \\
& \frac{1}{1-\beta_2} \Big[ \| \tf{\Phi_{yw}} \|_{\mathcal{L}_1} + \frac{\gamma_{\Delta}}{1 - \beta_1} \| \tf{\Phi_{y \ttf{\delta}}}\|_{\mathcal{L}_1} \| \tf{\Phi_{\ttf{\alpha} w}} \|_{\mathcal{L}_1} \Big] w_\infty \leq y_\infty \label{eq:L1-3}
\end{align}
\end{subequations}
then the closed loop system in Figure \ref{fig:model} is input-output stable over the region $\|y\|_{\ell_\infty} \leq y_\infty$ for all adversarial attack $\tf{w}$ satisfying \eqref{eq:attack}.
\end{lemma}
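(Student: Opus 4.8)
The plan is to run a nested $\ell_\infty$ small-gain argument on the frequency-domain relations \eqref{eq:FNLTI1}--\eqref{eq:FNLTI3}, peeling off the uncertainty loop and the policy loop one at a time and consuming the three hypotheses \eqref{eq:L1-1}--\eqref{eq:L1-3} in that order. The enabling fact is that $\rho(A)<1$ makes every $\tf{\Phi_{(\cdot)(\cdot)}}$ stable, so its $\mathcal{L}_1$ norm is exactly its induced $\ell_\infty\to\ell_\infty$ gain; hence $\|\tf{\Phi}\,\tf v\|_{\ell_\infty}\leq\|\tf{\Phi}\|_{\mathcal{L}_1}\|\tf v\|_{\ell_\infty}$ for every signal $\tf v$. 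First I would eliminate the uncertainty block. Taking $\ell_\infty$ norms in \eqref{eq:FNLTI3} and using $\|\ttf{\delta}\|_{\ell_\infty}\leq\gamma_\Delta\|\ttf{\alpha}\|_{\ell_\infty}$ gives
\[
\|\ttf{\alpha}\|_{\ell_\infty}\leq\|\tf{\Phi_{\ttf{\alpha} u}}\|_{\mathcal{L}_1}\|\tf u\|_{\ell_\infty}+\|\tf{\Phi_{\ttf{\alpha} w}}\|_{\mathcal{L}_1}\|\tf w\|_{\ell_\infty}+\beta_1\|\ttf{\alpha}\|_{\ell_\infty},
\]
and since $\beta_1<1$ by \eqref{eq:L1-1} I can solve for $\|\ttf{\alpha}\|_{\ell_\infty}$ and thereby bound
\[
\|\ttf{\delta}\|_{\ell_\infty}\leq\frac{\gamma_\Delta}{1-\beta_1}\left(\|\tf{\Phi_{\ttf{\alpha} u}}\|_{\mathcal{L}_1}\|\tf u\|_{\ell_\infty}+\|\tf{\Phi_{\ttf{\alpha} w}}\|_{\mathcal{L}_1}\|\tf w\|_{\ell_\infty}\right).
\]

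Next I substitute this bound into the $\ell_\infty$ norm of the measurement equation \eqref{eq:FNLTI2}; collecting the coefficients of $\|\tf u\|_{\ell_\infty}$ and $\|\tf w\|_{\ell_\infty}$ produces precisely the bracketed expressions appearing in \eqref{eq:L1-2} and \eqref{eq:L1-3}. I then close the policy loop: while $\|\tf y\|_{\ell_\infty}\leq y_\infty$ the hypothesis gives $\|\tf u\|_{\ell_\infty}\leq\gamma_\pi\|\tf y\|_{\ell_\infty}$, so the $\|\tf u\|_{\ell_\infty}$-term turns into $\beta_2\|\tf y\|_{\ell_\infty}$. Because $\beta_2<1$ by \eqref{eq:L1-2}, moving it to the left and dividing by $1-\beta_2$ yields
\[
\|\tf y\|_{\ell_\infty}\leq\frac{1}{1-\beta_2}\left(\| \tf{\Phi_{yw}} \|_{\mathcal{L}_1}+\frac{\gamma_\Delta}{1-\beta_1}\| \tf{\Phi_{y \ttf{\delta}}}\|_{\mathcal{L}_1}\| \tf{\Phi_{\ttf{\alpha} w}} \|_{\mathcal{L}_1}\right)w_\infty,
\]
which is $\leq y_\infty$ exactly by \eqref{eq:L1-3}. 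Back-substituting this finite bound for $\|\tf y\|_{\ell_\infty}$ into the bounds for $\|\tf u\|_{\ell_\infty}$, $\|\ttf{\delta}\|_{\ell_\infty}$, and (via \eqref{eq:FNLTI1}) $\|\tf x\|_{\ell_\infty}$ gives a finite gain from $\tf w$ to $(\tf x,\tf u,\tf y)$, which is the claimed input-output stability.

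The main obstacle is that the gain $\gamma_\pi$ is only \emph{local}, valid while $\|\tf y\|_{\ell_\infty}\leq y_\infty$, so the ``close the policy loop'' step is circular until one certifies that the closed-loop trajectory never leaves this region. I would make this rigorous with a truncation/continuation argument. Let $\tf y_T$ denote the signal truncated to $t\leq T$; because the stable LTI blocks are causal and $u[t]=\pi(y[t])$ acts pointwise, the entire inequality chain above holds verbatim with every $\ell_\infty$ norm replaced by its truncated counterpart, now under the weaker premise $\|\tf y_T\|_{\ell_\infty}\leq y_\infty$. Crucially, the well-posedness remark (no algebraic loop, so $y[t]$ has no direct feedthrough from $u[t]$ or $\delta[t]$, and $\alpha[t]$ none from $\delta[t]$) means $y[T{+}1]$ depends only on policy evaluations at times $\tau\leq T$, where the local gain is available whenever $\|\tf y_T\|_{\ell_\infty}\leq y_\infty$. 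Since $\|\tf y_T\|_{\ell_\infty}$ is nondecreasing in $T$ and starts at $|y[0]|=|D_w w[0]|$, a continuation argument shows it can never cross $y_\infty$: at any $T$ with $\|\tf y_T\|_{\ell_\infty}\leq y_\infty$ the derived bound forces $\|\tf y_T\|_{\ell_\infty}\leq y_\infty$ by \eqref{eq:L1-3}, and causality propagates this to $T{+}1$, so the region is never escaped. Letting $T\to\infty$ recovers the global-in-time bound and completes the proof; the only point requiring care is verifying that the direct-feedthrough term $D_{\alpha u}$ in \eqref{eq:NLTI3} does not create an instantaneous $y\to y$ dependence, which the well-posedness assumption rules out.
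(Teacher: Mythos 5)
Your proposal is correct and follows essentially the same route as the paper's proof: eliminate the uncertainty loop via \eqref{eq:L1-1}, close the policy loop via \eqref{eq:L1-2} to obtain the bound $\|\tf y\|_{\ell_\infty} \leq \frac{1}{1-\beta_2}\big(\|\tf{\Phi_{yw}}\|_{\mathcal{L}_1} + \frac{\gamma_\Delta}{1-\beta_1}\|\tf{\Phi_{y\ttf{\delta}}}\|_{\mathcal{L}_1}\|\tf{\Phi_{\ttf{\alpha} w}}\|_{\mathcal{L}_1}\big)\|\tf w\|_{\ell_\infty}$, and invoke \eqref{eq:L1-3} to keep $\tf y$ inside the region where $\gamma_\pi$ is valid. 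The one place you go beyond the paper is the truncation/continuation argument: the paper simply asserts that since the right-hand side of its bound is at most $y_\infty$, the trajectory never leaves the local region, which as written is circular (the bound was derived assuming the local gain applies along the whole trajectory); your induction on the truncation horizon $T$, using strict properness of $\tf{\Phi_{yu}}$ and $\tf{\Phi_{y\ttf{\delta}}}$ so that $\|\tf y_{T+1}\|_{\ell_\infty} \leq \beta_2 y_\infty + (1-\beta_2)y_\infty = y_\infty$, closes that gap rigorously. The only caveat is that your truncated inequality chain implicitly requires the uncertainty block $\ttf{\Delta}$ to be causal with its gain bound valid on truncations (an extended-space version of $\|\ttf{\delta}\|_{\ell_\infty} \leq \gamma_\Delta \|\ttf{\alpha}\|_{\ell_\infty}$); this is a standard and natural assumption for a stable dynamic uncertainty, and the paper needs it implicitly as well, but it is worth stating since the lemma's hypothesis only gives a whole-signal norm bound.
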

The proof of Lemma \ref{lemma:L1} can be found in Appendix A. 
To certify the local input-output stabilit of the closed loop system using Lemma \ref{lemma:L1}, we can iteratively search for $y_\infty$ until the local Lipschitz continuous assumption and \eqref{eq:L1-1} - \eqref{eq:L1-3} are both satisfied, or the state constraint is violated. 
Note that when $\gamma_\pi$ is the \emph{global} $\ell_\infty$ to $\ell_\infty$ bound of the neural network policy, we can drop the condition \eqref{eq:L1-3} because $y_\infty$ can be arbitrarily large. The global version of Lemma \ref{lemma:L1}, i.e., with $y_\infty$ being arbitrarily large, can be found in the robust control literature in~\cite{khammash1991performance}.
When the magnitude of the signals $\tf w$ and $\tf y$ is characterized by the $\ell_2$ norm, we can replace the $\mathcal{L}_1$ norm in Lemma \ref{lemma:L1} by the $\mathcal{H}_\infty$ norm, as the $\mathcal{H}_\infty$ norm is equivalent to the $\ell_2$ to $\ell_2$ induced norm. We include the global and $\mathcal{H}_\infty$ version of Lemma \ref{lemma:L1} as a corollary in Appendix A, which is an unstructured version of the main loop theorem\footnote{This can be considered as an extension of the well-known small gain theorem.}~\cite{packard1993complex, zhou1996robust} in the robust control literature ($\mathcal{H}_\infty$ control and structured singular value). 

\subsection{Stable Plant: improvement} \label{sec:main_thm}
Lemma \ref{lemma:L1} has several restrictions.
First, the neural network policy needs to be Lipschitz continuous, and thus Lemma \ref{lemma:L1} cannot be applied on non-differentiable or discontinuous policies due to quantization or other issues. Second, even when the given neural network policy is Lipschitz continuous, the bound of the local Lipschitz constant $\gamma_\pi$ for a deep neural network policy is usually very loose. As a consequence, Lemma \ref{lemma:L1} can only be applied to certify the robustness of Figure \ref{fig:model} over a small region near the stable equilibrium. Finally and most importantly, even if the given policy is Lipschitz continuous and the local Lipschitz constant $\gamma_\pi$ is \emph{exact}, using $\| \tf u \|_{\ell_\infty} \leq \gamma_\pi \|\tf y \|_{\ell_\infty}$ to characterize the input-output relation of a given neural network is usually very loose. In this subsection, we improve the results of Lemma \ref{lemma:L1} and propose a more useful Theorem to certify the boundedness of the structure in Figure \ref{fig:model}.

Our strategy is to use the \emph{static} neural network certification tool to give a tighter characterization of the input-output relation of the given neural network policy. 
The following Theorem gives a sufficient condition to ensure closed loop boundedness under any adversarial attack satisfying \eqref{eq:attack}.


\begin{theorem} \label{thm:LTI}
Consider a stable LTI plant \eqref{eq:NLTI1} - \eqref{eq:zero} interconnected with a neural network policy \eqref{eq:NLTI4} and a dynamic uncertainty block \eqref{eq:NLTI5} as shown in Figure \ref{fig:model}. Assume that the persistent perturbation $\tf w$ lies in the set given by \eqref{eq:attack}. If we can find a quadruplet $(\bar{y}, \bar{u}, \bar{\alpha}, \bar{\delta})$ satisfying the following conditions:
\begin{enumerate}
    \item Neural network policy robustness certificate: The static policy $u = \pi(y)$ has the property $|u| \preceq \bar{u}$ for all $|y| \preceq \bar{y}$
    \item Input-output relation of the uncertainty block: $\ttf{\delta} = \Delta(\ttf{\alpha})$ has the property $| \alpha[k] | \preceq \bar{\alpha}$, $k = 0, \cdots, t$ $\implies$ $| \delta[t] | \preceq \bar{\delta}$ for all $t \geq 0$.
    \item Feedback condition: \text{abs}($\tf{\Phi_{yw}}$) $\bar{w}$ + \text{abs}($\tf{\Phi_{yu}}$) $\bar{u}$ + \text{abs}($\tf{\Phi_{y\ttf{\delta}}}$) $\bar{\delta}$ $\preceq \bar{y}$ and \text{abs}($\tf{\Phi_{\ttf{\alpha}w}}$) $\bar{w}$ + \text{abs}($\tf{\Phi_{\ttf{\alpha}u}}$) $\bar{u}$ + \text{abs}($\tf{\Phi_{\ttf{\alpha} \ttf{\delta}}}$) $\bar{\delta}$ $\preceq \bar{\alpha}$
\end{enumerate}
then we have the following properties:
\begin{enumerate}
    \item Bounded feedback signals: $|y[t]| \preceq \bar{y}$, $|u[t]| \preceq \bar{u}$, $|\alpha[t]| \preceq \bar{\alpha}$, $|\delta[t]| \preceq \bar{\delta}$ for all $t \geq 0$
    \item Bounded state: $|x[t]| \preceq$ $\bar{x}$, with $\bar{x}$ = \text{abs}($\tf{\Phi_{xw}}$) $\bar{w}$ + \text{abs}($\tf{\Phi_{xu}}$) $\bar{u}$ + \text{abs}($\tf{\Phi_{x\ttf{\delta}}}$) $\bar{\delta}$ for all $t \geq 0$
\end{enumerate}
\end{theorem}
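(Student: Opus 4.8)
The plan is to prove both conclusions by a single forward induction on the time index $t$, exploiting the fact that with zero initial condition every closed-loop signal is causal, so the frequency-domain relations \eqref{eq:FNLTI1}--\eqref{eq:FNLTI3} unroll into \emph{finite} time-domain convolutions. Concretely, using the convolution formula from the Notations paragraph together with $x[s]=w[s]=u[s]=\delta[s]=0$ for $s<0$, I would write $y[t]=\sum_{\tau=0}^t \Phi_{yw}[\tau]\,w[t-\tau]+\sum_{\tau=0}^t \Phi_{yu}[\tau]\,u[t-\tau]+\sum_{\tau=0}^t \Phi_{y\ttf{\delta}}[\tau]\,\delta[t-\tau]$, and analogously for $\alpha[t]$ and $x[t]$. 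The crucial structural input is the well-posedness remark: since $\tf{\Phi_{yu}}$, $\tf{\Phi_{y\ttf{\delta}}}$ and $\tf{\Phi_{\ttf{\alpha}\ttf{\delta}}}$ are strictly proper, their zeroth spectral components vanish, so $y[t]$ depends only on $u[0],\dots,u[t-1]$ and $\delta[0],\dots,\delta[t-1]$ (not on $u[t],\delta[t]$), and $\alpha[t]$ does not depend on $\delta[t]$. This removes the algebraic loop and fixes the order in which the four quantities are resolved within one time step: $y[t]\to u[t]\to \alpha[t]\to \delta[t]$.

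The inductive step then chains the three hypotheses. Assume $|y[k]|\preceq\bar y$, $|u[k]|\preceq\bar u$, $|\alpha[k]|\preceq\bar\alpha$, $|\delta[k]|\preceq\bar\delta$ for all $k\le t-1$. First, applying the element-wise triangle inequality to the convolution for $y[t]$ and using $|w[s]|\preceq\bar w$, $|u[s]|\preceq\bar u$, $|\delta[s]|\preceq\bar\delta$ bounds each term by $|\Phi[\tau]|$ times a bar vector; since every $|\Phi[\tau]|$ is entrywise nonnegative, the truncated sums $\sum_{\tau=0}^t$ are dominated by the full sums $\sum_{\tau=0}^\infty=\text{abs}(\cdot)$, giving $|y[t]|\preceq\text{abs}(\tf{\Phi_{yw}})\bar w+\text{abs}(\tf{\Phi_{yu}})\bar u+\text{abs}(\tf{\Phi_{y\ttf{\delta}}})\bar\delta\preceq\bar y$ by the first half of the Feedback condition. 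Feeding $|y[t]|\preceq\bar y$ into the robustness certificate (condition 1) yields $|u[t]|\preceq\bar u$; the same convolution bound for $\alpha[t]$ (now legitimately using the freshly obtained $|u[t]|\preceq\bar u$, since $\Phi_{\ttf{\alpha}u}[0]$ may be nonzero) together with the second half of the Feedback condition gives $|\alpha[t]|\preceq\bar\alpha$; and finally, having $|\alpha[k]|\preceq\bar\alpha$ for all $k\le t$, the causal uncertainty characterization (condition 2) delivers $|\delta[t]|\preceq\bar\delta$. This closes the induction and proves Property 1. Property 2 follows by one more application of the convolution bound to $x[t]$, now that all feedback signals are known to be bounded by their bars for every $t$.

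I expect the main obstacle to be bookkeeping the causal ordering rather than any hard estimate: one must verify that the instantaneous self-dependencies are exactly the ones annihilated by strict properness, so that each quantity is bounded using only already-established bounds, and one must be careful that the behavior of $\pi(\cdot)$ and $\Delta(\cdot)$ is invoked in the element-wise ($\preceq$) sense supplied by conditions 1 and 2 rather than in any norm sense. A secondary point worth stating explicitly is that the truncated-versus-infinite convolution comparison relies on the nonnegativity of $\text{abs}(\tf{\Phi})=\sum_{\tau}|\Phi[\tau]|$ and on the stability $\rho(A)<1$ of the plant, which guarantees these sums are finite so that $\bar x$ and the other bars are well defined.
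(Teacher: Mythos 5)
Your proposal is correct and takes essentially the same route as the paper: a forward induction on $t$ showing that $\{(y,u,\alpha,\delta)\,:\, |y|\preceq\bar y,\ |u|\preceq\bar u,\ |\alpha|\preceq\bar\alpha,\ |\delta|\preceq\bar\delta\}$ is positively invariant, bounding each convolution term-by-term so that the truncated sums are dominated by $\text{abs}(\cdot)$ applied to the bar vectors, and then closing the step with conditions 1--3. Your explicit within-step resolution order $y[t]\to u[t]\to\alpha[t]\to\delta[t]$, justified by the strict properness of $\tf{\Phi_{yu}}$, $\tf{\Phi_{y\ttf{\delta}}}$, $\tf{\Phi_{\ttf{\alpha}\ttf{\delta}}}$, is in fact slightly more careful than the paper's write-up, which bounds $|\alpha[T]|$ before stating $|u[T]|\preceq\bar u$ even though $\Phi_{\alpha u}[0]=D_{\alpha u}$ may be nonzero and leaves the causality bookkeeping implicit in the well-posedness remark.
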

The key idea of Theorem \ref{thm:LTI} is to combine a static neural network certification algorithm~\cite{kolter2017provable,weng2018towards,Gehr2018AI2,dvijotham2018dual,Boopathy2019cnncert, royo2019fast} with robust control theory to certify a neural network policy in a feedback control loop. The first condition of Theorem \ref{thm:LTI} is an input-output characterization of the neural network policy, which can be obtained by a static neural network certification algorithm. The second condition is a characterization of the model uncertainty block $\ttf{\Delta}$. The third condition, when combining with the first two conditions, ensures that $\{(y, u, \alpha, \delta) | \,\, |y| \preceq \bar{y}, |u| \preceq \bar{u}, |\alpha| \preceq \bar{\alpha}, |\delta| \preceq \bar{\delta} \}$ is a \emph{positively invariant set} of the closed loop dynamical system. The complete proof of Theorem \ref{thm:LTI} is in Appendix A. Theorem \ref{thm:LTI} can be used as follows: if we have $(\bar{x}, \bar{y}, \bar{u}) \preceq (x_{lim}, y_{lim}, u_{lim})$, then the requirement \eqref{eq:goal} is satisfied. We will discuss how to find a quadruplet $(\bar{y}, \bar{u}, \bar{\alpha}, \bar{\delta})$ satisfying the conditions of Theorem \ref{thm:LTI} in Section \ref{sec:find_invariant}. 

Theorem \ref{thm:LTI} has several advantages over Lemma \ref{lemma:L1}. First, Theorem \ref{thm:LTI} does not require the differentiability or continuity of the neural network policy $\pi(\cdot)$. Theorem \ref{thm:LTI} is valid as long as the property $|u| \preceq \bar{u}$ for all $|y| \preceq \bar{y}$ can be certified. This is one of the key difference between our approach and the existing literature~\cite{berkenkamp2017safe, richards2018lyapunov, jin2018stability, jin2018control}, where the results are obtained based on the assumption of Lipschitz continuity. Second, the neural network certification tool can give a tighter input-output characterization of the neural network policy than the local Lipschitz constant. As a result, the conditions of Theorem \ref{thm:LTI} are less restricted and easier to satisfy. Indeed, the following Theorem (proof in Appendix A) claims that the conditions of Lemma \ref{lemma:L1} implies that the sufficient conditions of Theorem \ref{thm:LTI} will always hold.  This means that Theorem \ref{thm:LTI} can be applied on a strictly larger class of problems than that of Lemma \ref{lemma:L1}.
\begin{theorem} \label{thm:tight}
The conditions of Lemma \ref{lemma:L1} imply the conditions of Theorem \ref{thm:LTI}.
\end{theorem}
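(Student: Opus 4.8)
The plan is to exhibit an explicit quadruplet $(\bar{y},\bar{u},\bar{\alpha},\bar{\delta})$ built from the scalar quantities appearing in Lemma \ref{lemma:L1}, and then verify the three conditions of Theorem \ref{thm:LTI} in turn. Since Lemma \ref{lemma:L1} is stated in terms of scalar $\ell_\infty$ magnitudes, the natural candidates are uniform vectors: set $\bar{w} = w_\infty \mathbf{1}$ (forced by \eqref{eq:attack}), $\bar{y} = y_\infty \mathbf{1}$, $\bar{u} = \gamma_\pi y_\infty \mathbf{1}$, $\bar{\alpha} = \alpha_\infty \mathbf{1}$, and $\bar{\delta} = \gamma_\Delta \alpha_\infty \mathbf{1}$, where $y_\infty$ is the radius supplied by the Lemma and $\alpha_\infty$ is a scalar I will fix at the end so that the feedback inequalities close exactly.

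First I would dispatch conditions 1 and 2. For condition 1, if $|y| \preceq \bar{y} = y_\infty\mathbf{1}$ then $\|y\|_{\ell_\infty} \le y_\infty$, so the assumed local $\ell_\infty\to\ell_\infty$ gain gives $\|u\|_{\ell_\infty} \le \gamma_\pi y_\infty$, i.e. $|u| \preceq \gamma_\pi y_\infty \mathbf{1} = \bar{u}$. For condition 2, I use causality of the stable block $\ttf{\Delta}$: given $|\alpha[k]| \preceq \alpha_\infty\mathbf{1}$ for $k = 0,\dots,t$, define the truncation $\ttf{\alpha}^{(t)}$ that agrees with $\ttf{\alpha}$ on $[0,t]$ and is zero afterwards, so $\|\ttf{\alpha}^{(t)}\|_{\ell_\infty} \le \alpha_\infty$. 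Since $\delta[t]$ depends only on $\alpha[0],\dots,\alpha[t]$, the response $\ttf{\delta}^{(t)}$ of $\ttf{\Delta}$ to $\ttf{\alpha}^{(t)}$ satisfies $\delta^{(t)}[t] = \delta[t]$; combining with $\|\ttf{\delta}\|_{\ell_\infty} \le \gamma_\Delta\|\ttf{\alpha}\|_{\ell_\infty}$ yields $|\delta[t]| \preceq \|\ttf{\delta}^{(t)}\|_{\ell_\infty}\mathbf{1} \le \gamma_\Delta\alpha_\infty\mathbf{1} = \bar{\delta}$.

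The heart of the argument is condition 3, and the bridge between the element-wise $\text{abs}(\cdot)$ operator of Theorem \ref{thm:LTI} and the $\mathcal{L}_1$ norm of Lemma \ref{lemma:L1} is the observation that the $(i,j)$ entry of $\text{abs}(\tf{G})$ equals $\|\tf{g_{ij}}\|_1$, so $\|\tf{G}\|_{\mathcal{L}_1}$ is exactly the maximum row sum of the nonnegative matrix $\text{abs}(\tf{G})$. Consequently, for any scalar $c \ge 0$, $\text{abs}(\tf{G})(c\mathbf{1}) \preceq c\,\|\tf{G}\|_{\mathcal{L}_1}\mathbf{1}$. Applying this termwise collapses the two vector feedback inequalities to the scalar inequalities
\[ w_\infty\|\tf{\Phi_{yw}}\|_{\mathcal{L}_1} + \gamma_\pi y_\infty\|\tf{\Phi_{yu}}\|_{\mathcal{L}_1} + \gamma_\Delta\alpha_\infty\|\tf{\Phi_{y \ttf{\delta}}}\|_{\mathcal{L}_1} \le y_\infty, \]
\[ w_\infty\|\tf{\Phi_{\ttf{\alpha} w}}\|_{\mathcal{L}_1} + \gamma_\pi y_\infty\|\tf{\Phi_{\ttf{\alpha} u}}\|_{\mathcal{L}_1} + \gamma_\Delta\alpha_\infty\|\tf{\Phi_{\ttf{\alpha} \ttf{\delta}}}\|_{\mathcal{L}_1} \le \alpha_\infty. \]
To close them I would fix $\alpha_\infty$ by solving the second with equality: since $\gamma_\Delta\|\tf{\Phi_{\ttf{\alpha} \ttf{\delta}}}\|_{\mathcal{L}_1} = \beta_1 < 1$ by \eqref{eq:L1-1}, set
\[ \alpha_\infty = \frac{w_\infty\|\tf{\Phi_{\ttf{\alpha} w}}\|_{\mathcal{L}_1} + \gamma_\pi y_\infty\|\tf{\Phi_{\ttf{\alpha} u}}\|_{\mathcal{L}_1}}{1-\beta_1}. \]
Substituting into the first inequality and grouping the $w_\infty$ and $y_\infty$ terms, the coefficient of $y_\infty$ becomes exactly $\beta_2$ of \eqref{eq:L1-2}, and using $\beta_2 < 1$ the inequality rearranges into precisely condition \eqref{eq:L1-3}, which is assumed to hold. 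Hence the constructed quadruplet meets all three conditions of Theorem \ref{thm:LTI}.

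I anticipate that the delicate points are the two bridging steps rather than the closing algebra: establishing $\text{abs}(\tf{G})(c\mathbf{1}) \preceq c\|\tf{G}\|_{\mathcal{L}_1}\mathbf{1}$ so that the vector feedback conditions genuinely reduce to the scalar Lemma conditions, and handling condition 2 via causal truncation so that the pointwise-in-time implication required by Theorem \ref{thm:LTI} follows from the signal-level gain bound assumed in Lemma \ref{lemma:L1}. Once these are in place, the matching of $\beta_1$, $\beta_2$, and \eqref{eq:L1-3} is forced by the choice of $\alpha_\infty$ and reduces to bookkeeping.
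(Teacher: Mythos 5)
Your proof is correct and takes essentially the same route as the paper's: a uniform quadruplet built from the Lemma's scalar gains, the max-row-sum bound $\text{abs}(\tf{G})(c\mathbf{1}) \preceq c\,\|\tf{G}\|_{\mathcal{L}_1}\mathbf{1}$ to pass from the entrywise feedback conditions of Theorem \ref{thm:LTI} to the $\mathcal{L}_1$ conditions of Lemma \ref{lemma:L1}, and the closing algebra via $\beta_1, \beta_2 < 1$ and \eqref{eq:L1-3}. The only differences are cosmetic --- you pin $\bar{y} = y_\infty \mathbf{1}$ and solve a single scalar equation for $\alpha_\infty$, whereas the paper solves the $2\times 2$ system \eqref{eq:ext_p2} jointly for $(y_{ref}, \alpha_{ref})$ with $y_{ref} \leq y_\infty$ --- and your explicit causal-truncation argument for condition 2 is a welcome detail that the paper dismisses as ``straightforward to verify.''
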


Note that Theorem \ref{thm:LTI} only certifies the \emph{boundedness} of the closed loop system, not the \emph{stability} of the closed loop system -- Theorem \ref{thm:LTI} does not guarantee that $x = 0$ is a stable equilibrium. However, in the presence of \emph{persistent} adversarial perturbation, we argue that there is no significant difference between boundedness and stability because both of them will have a finite yet nonzero state deviation. In our case study in Section \ref{sec:experiment}, we show that Theorem \ref{thm:LTI} is much more useful than the traditional robust control approach (Lemma \ref{lemma:L1}) as it can certify the requirement \eqref{eq:goal} with persistent adversarial attack \eqref{eq:attack} that is $5$ times larger.

\subsection{Unstable Plant} \label{sec:open_unstable}
In this subsection, we consider the case where the plant \eqref{eq:NLTI1} - \eqref{eq:NLTI3} is unstable. Our strategy is to extract a first order approximation of the neural network policy to stabilize the plant first, then analyze the interconnection of the stabilized plant and the residual control policy. Specifically, we rewrite the neural network policy as $u[t] = \pi(y[t]) = K_0 y[t] + \pi_0(y[t])$ for some matrix $K_0$. We call $\pi_0(\cdot)$ the residual control policy. Equations \eqref{eq:NLTI1} - \eqref{eq:NLTI3} then become
\begin{subequations}
\begin{align}
x[t+1] &= (A + B K_0 C) x[t] + B u_0[t] + (B K_0 D_w + B_w) w[t] + B_\delta \delta[t] \label{eq:LLTI1}\\
y[t] &= C x[t] + D_w w[t] \label{eq:LLTI2} \\
\alpha[t] &= (C_{\alpha} + D_{\alpha u} K_0 C ) x[t] + D_{\alpha u} u_0[t] + (D_{\alpha u} K_0 D_w + D_{\alpha w}) w[t] \label{eq:LLTI3}
\end{align}
\end{subequations}
with the neural network policy $u_0[t] = \pi_0(y[t])$. As long as the spectral radius of the closed loop system matrix $A_{cl} = (A + B K_0 C)$ is less than $1$, the transfer matrix $(zI - A_{cl})^{-1}$ is stable. Theorem \ref{thm:LTI} can then be used with redefined transfer matrices: for instance, we have $\tf{\Phi_{xu}} = (zI-A_{cl})^{-1} B$, $\tf{\Phi_{xw}} = (zI-A_{cl})^{-1} (B K_0 D_w + B_w)$, and $\tf{\Phi_{x \ttf{\delta}}} = (zI-A_{cl})^{-1} B_\delta$. Other transfer matrices can be derived in a similar manner. In the following, we propose two different ways to obtain a candidate $K_0$: (1) using the Jacobian evaluated at the origin, and (2) using neural network certification tool to find a first order approximation of the policy over a region. 

If the neural network policy has the property $\pi(0) = 0$ and is differentiable at $y = 0$, we can use the Jacobian of $\pi(y)$ evaluated at $y = 0$ as a candidate for $K_0$, i.e., $K_0 = \frac{\partial \pi(y)}{\partial y}\Bigr|_{y=0}$. In this case, the residual control policy $\pi_0(y)$ has the following property:
\begin{equation}
    \pi_0(0) = 0 \quad \text{and} \quad \underset{\|y\| \to 0}{\text{limit}} \,\, \frac{\|\pi_0(y)\|}{\|y\|} \to 0 \label{eq:res_cont}
\end{equation}
We then have the following Lemma adopt from Theorem $4.3$ in \cite{astrom2010feedback} (uncertainty $\ttf{\Delta}$ is ignored).
\begin{lemma}[Local stability] \label{lemma:local}
Consider the dynamical system \eqref{eq:LLTI1} - \eqref{eq:LLTI2} with the residual control policy $\pi_0(y) = \pi(y) - K_0 y$ satisfying \eqref{eq:res_cont}. If $\rho(A + B K_0 C) < 1$, then $x = 0$ is a locally asymptotically stable equilibrium point of the system \eqref{eq:LLTI1} - \eqref{eq:LLTI3}. On the other hand, if $\rho(A + B K_0 C) > 1$, then $x = 0$ is a locally unstable equilibrium point of the system \eqref{eq:LLTI1} - \eqref{eq:LLTI3}.
\end{lemma}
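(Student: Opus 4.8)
The plan is to recognize this statement as an instance of Lyapunov's indirect (first) method for discrete-time nonlinear systems, which is exactly the content of Theorem 4.3 in \cite{astrom2010feedback}; the real work lies in casting our closed loop into the form required by that theorem and verifying its hypotheses. Ignoring the uncertainty block $\ttf{\Delta}$ and setting $w = 0$ (as is standard when analyzing the equilibrium itself), equations \eqref{eq:LLTI1}--\eqref{eq:LLTI2} collapse to the autonomous difference equation $x[t+1] = f(x[t])$ with $f(x) = A_{cl}\,x + B\,\pi_0(Cx)$ and $A_{cl} = A + BK_0C$. Since $\pi_0(0)=0$ by \eqref{eq:res_cont}, we have $f(0)=0$, so $x=0$ is an equilibrium, and the eigenvalue locations of its linearization determine its stability.

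The key step is to show that the linearization of $f$ at the origin is exactly $A_{cl}$, i.e.\ that the nonlinear remainder $g(x) := B\,\pi_0(Cx)$ is first-order vanishing, $g(x)=o(\|x\|)$ as $\|x\|\to 0$. This follows directly from the second property in \eqref{eq:res_cont}: given $\varepsilon>0$ there is a $\rho>0$ with $\|\pi_0(y)\|\le \varepsilon\|y\|$ whenever $\|y\|\le\rho$, hence $\|g(x)\|\le \|B\|\,\|\pi_0(Cx)\|\le \varepsilon\,\|B\|\,\|C\|\,\|x\|$ for all $x$ small enough that $\|Cx\|\le\rho$. Thus $f(x)=A_{cl}x+o(\|x\|)$, and the spectrum of the linearization coincides with that of $A_{cl}$.

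With this decomposition in hand I would invoke the two halves of the linearization theorem. For the stable case $\rho(A_{cl})<1$: since $A_{cl}$ is a discrete-time Schur matrix, the discrete Lyapunov equation $A_{cl}^\top P A_{cl}-P=-Q$ admits a unique $P\succ 0$ for any chosen $Q\succ 0$; taking $V(x)=x^\top P x$ and using the $o(\|x\|)$ bound on $g$ to control the cross and quadratic remainder terms in $V(f(x))-V(x)$ gives $V(f(x))-V(x)<0$ on a deleted neighborhood of the origin, yielding local asymptotic stability. For the unstable case $\rho(A_{cl})>1$: some eigenvalue of $A_{cl}$ lies strictly outside the unit disk, and together with $g(x)=o(\|x\|)$ this forces instability, provable via a Chetaev function built from the unstable eigenspace of $A_{cl}$ or via a local unstable-manifold construction.

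I expect the main obstacle to be the unstable direction rather than the stable one. The stable case is a routine discrete Lyapunov-equation argument once $g(x)=o(\|x\|)$ is established. The instability claim is subtler because one must actively exhibit trajectories that leave every neighborhood of the origin, and the $o(\|x\|)$ nonlinearity must be shown unable to cancel the expansion along the unstable eigenspace of $A_{cl}$; this is exactly where the strict inequality $\rho(A_{cl})>1$ (rather than $\rho(A_{cl})\ge 1$) is essential. Since this is precisely the setting covered by the cited theorem, the cleanest route is to verify the $o(\|x\|)$ hypothesis carefully and then appeal to \cite{astrom2010feedback} for both conclusions.
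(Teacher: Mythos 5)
Your proposal is correct and takes essentially the same route as the paper: the paper offers no standalone proof but simply adopts Theorem 4.3 of \cite{astrom2010feedback} (Lyapunov's indirect method) with the uncertainty $\ttf{\Delta}$ ignored, and your reduction of the closed loop to $x[t+1] = A_{cl}x + B\,\pi_0(Cx)$ together with the $o(\|x\|)$ remainder bound extracted from \eqref{eq:res_cont} is exactly the verification of that theorem's hypotheses. Your sketches of the discrete Lyapunov-equation argument for $\rho(A_{cl})<1$ and the Chetaev-type argument for $\rho(A_{cl})>1$ merely spell out the standard linearization proof the paper delegates to the citation.
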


If the neural network policy is not differentiable or the equilibrium $x = 0$ is locally unstable according to Lemma \ref{lemma:local}, one can leverage the neural network certification tools to find a different candidate $K_0$. For instance, the method proposed in~\cite{weng2018towards} provide a pair of linear lower and upper bounds on the neural network output as
\begin{equation}
   K_L y + b_L \leq \pi(y) \leq K_U y + b_U, \quad \forall | y | \preceq y_{ref}. \label{eq:NN-CERT}
\end{equation}
A candidate $K_0$ is given by $(K_U + K_L)/2$, which is a first order approximation of the control policy over the region $| y | \preceq y_{ref}$. 
As long as we can find a $K_0$ to make $\rho(A_{cl}) < 1$, Theorem \ref{thm:LTI} is a valid sufficient condition to verify the requirement \eqref{eq:goal} under persistent adversarial perturbation \eqref{eq:attack}. This statement holds even when a stable equilibrium does not exist.

\section{Algorithms}\label{sec:find_invariant}

In this section, we propose an iterative algorithm (Algorithm \ref{alg:lin}) to find a quadruplet $(\bar{y}, \bar{u_0}, \bar{\alpha}, \bar{\delta})$ that satisfies the conditions of Theorem \ref{thm:LTI}. We then propose a simple and effective attack strategy for a neural network control system.
\subsection{Algorithm for Theorem \ref{thm:LTI}}
For an unstable plant, we assume that a matrix $K_d$ with $\rho(A + B K_d C) < 1$ is given as a default linear approximation of the neural network policy\footnote{We note that finding a static gain $K$ to make $(A + B K C)$ a stable matrix is known as the static output feedback problem in the control literature. In the worst case, this problem can be NP-hard \cite{blondel1997np}. For an output feedback problem, one may consider using a dynamic (recurrent) policy for control, but that is beyond the scope of this paper.}. 
We assume that the uncertainty block is described by $|\delta| \preceq \Gamma_{\Delta} |\alpha|$ for a non-negative matrix $\Gamma_{\Delta}$. We explain why this form of uncertainty is natural when the model and uncertainty are learned using a data driven approach in~\cite{dean2017sample} in Appendix B. The following algorithm finds a quadruplet $(\bar{y}, \bar{u_0}, \bar{\alpha}, \bar{\delta})$ satisfying the conditions of Theorem \ref{thm:LTI}.

\begin{algorithm}[ht]
   \caption{Certification of state and control constraints under persistent adversarial perturbation}
   \label{alg:lin}
\begin{algorithmic}[1]
   \STATE {\bfseries Input:}  Equations \eqref{eq:NLTI1} - \eqref{eq:goal}, initial bounds $y_{ref} = 0$ and $\alpha_{ref} = 0$, default approximation $K_d$, parameter $\epsilon = 10^{-6}$, flag Success = False, Done = False, $k = 0$, MaxIter = $200$
   \STATE {\bfseries Output:} flag Success, certified bounds $\bar{x}$, $\bar{y}$, $\bar{u}$
   \WHILE{Done == False \AND $k$ < MaxIter}
   \STATE $\bar{u_0}$, $\bar{u}$, $K$ = NN-CERTIFICATION($\pi$, $y_{ref}$, $K_d$) from existing tools such as \eqref{eq:NN-CERT}
   \STATE $\bar{\delta} = \Gamma_{\Delta} \,\, \alpha_{ref}$
   \STATE Calculate the transfer matrices $\tf{\Phi_{xu}}$, $\tf{\Phi_{xw}}$, $\tf{\Phi_{x \ttf{\delta}}}$, $\tf{\Phi_{yu}}$, $\tf{\Phi_{yw}}$, $\tf{\Phi_{y \ttf{\delta}}}$, $\tf{\Phi_{\ttf{\alpha} u}}$, $\tf{\Phi_{\ttf{\alpha} w}}$, $\tf{\Phi_{\ttf{\alpha} \ttf{\delta}}}$.
   \STATE $\begin{bmatrix} \bar{x} \\ \bar{y} \\ \bar{\alpha} \end{bmatrix} =$ abs\Big($\begin{bmatrix} \tf{\Phi_{xw}} & \tf{\Phi_{xu}} & \tf{\Phi_{x \ttf{\delta}}} \\ \tf{\Phi_{yw}} & \tf{\Phi_{yu}} & \tf{\Phi_{y \ttf{\delta}}} \\ \tf{\Phi_{\ttf{\alpha} w}} & \tf{\Phi_{\ttf{\alpha} u}} & \tf{\Phi_{\ttf{\alpha}  \ttf{\delta}}} \end{bmatrix}$\Big) $\begin{bmatrix} \bar{w} \\ \bar{u_0} \\ \bar{\delta} \end{bmatrix}$
   \IF {$\bar{x} \not \preceq x_{lim}$ or $\bar{y} \not \preceq y_{lim}$ or $\bar{u} \not \preceq u_{lim}$}
   \STATE Success = False, Done = True
   \ELSIF {$\bar{y} \preceq y_{ref}$ and $\bar{\alpha} \preceq \alpha_{ref}$}
   \STATE Success = True, Done = True
   \ELSE
   \STATE $y_{ref} = (1+\epsilon) \bar{y}$, $\alpha_{ref} = (1+\epsilon) \bar{\alpha}$, $k = k+1$
   \ENDIF
   \ENDWHILE
   \STATE \textbf{Return} Success, $\bar{x}$, $\bar{y}$, $\bar{u}$
\end{algorithmic}
\end{algorithm}

Algorithm \ref{alg:lin} can be interpreted as follows. 
We first extract a linear policy $K$ to make the closed loop transfer matrices stable and calculate the bounds for $\bar{u}$ and $\bar{u_0}$ over the region $|y| \preceq y_{ref}$ using any neural network certification tool. Meanwhile, we calculate the bound for $\delta$ based on the assumption of the uncertainty block $\ttf{\Delta}$. These two steps ensure that the quadruplet $(y_{ref}, \bar{u_0}, \alpha_{ref}, \bar{\delta})$ satisfies the first two conditions of Theorem \ref{thm:LTI}. Given the range of the adversarial perturbation $\bar{w}$, the residual control action $\bar{u_0}$, and the uncertainty-induced input $\bar{\delta}$, we calculate the bounds for the state $x$, measurement $y$, and $\alpha$ on Line $7$. If we have $\bar{y} \preceq y_{ref}$ and $\bar{\alpha} \preceq \alpha_{ref}$, then $(y_{ref}, \bar{u_0}, \alpha_{ref}, \bar{\delta})$ also satisfies the third condition of Theorem \ref{thm:LTI}, thus we obtain a certificate for closed loop boundedness. If $\bar{y} \not\preceq y_{ref}$ or $\bar{\alpha} \not\preceq \alpha_{ref}$, we then increase the test bound $(y_{ref}, \alpha_{ref})$ and repeat the search. 


\subsection{Attack Algorithm}\label{sec:attack_algo}
In addition to the certification algorithm, here we propose an algorithm to attack the neural network policy in the control loop. As an example, we show how to design a perturbation sequence $\{w[t]\}_{t=0}^T$ with $\| \tf w \|_{\ell_\infty} = 1$ to attack the $i$-th state $\tf x_i$. Our idea is to follow Lines $4$ - $6$ of Algorithm \ref{alg:lin} to construct the closed loop transfer matrices with the help of neural network certification tools. We then have
\begin{equation}
    x[T] = \sum_{\tau=0}^T \Phi_{xw}[T-\tau] w[\tau] + \Phi_{xu}[T-\tau] u_0[\tau] + \Phi_{x \delta}[T-\tau] \delta[\tau]. \nonumber
\end{equation}
If we ignore the contribution of $u_0$ and $\delta$, we can maximize $x_i[T]$ using the following $\ell_\infty$ bounded perturbation sequence:
\begin{equation}
    w_j[t] = \text{sign}((\Phi_{xw}[T-t])_{ij}), \quad t = 0, \cdots, T \label{eq:wattack}
\end{equation}
for each $j$. 
Note that \eqref{eq:wattack} is sub-optimal because we ignore the contribution of $u_0$ and $\delta$. Nevertheless, we show in the next section that our attack \eqref{eq:wattack} is extremely strong.

\section{Case Study} \label{sec:experiment}

In this section, we demonstrate our algorithm on a cart-pole example and show the following results:
\begin{itemize}
    \item[(i)] Our proposed framework (Theorem \ref{thm:LTI} and Algorithm \ref{alg:lin}) outperforms methods based on traditional robust control theory (Lemma \ref{lemma:L1}). Specifically, Algorithm \ref{alg:lin} can certify the boundedness of the closed loop system with attack level that is $5$ times larger than that of a robust control approach.
    See the result in Figure \ref{fig:a}.
    \item[(ii)] Our proposed attack algorithm \eqref{eq:wattack} is far more effective than an exhuastive Monte-Carlo attack. In particular, our model-based attack algorithm can successfully discover the internal vulnerability of the closed loop system while an exhaustive Monte-Carlo simulation cannot. See the result in Figures \ref{fig:b}, \ref{fig:atkin}, and \ref{fig:atkout}.
    \item[(iii)] We show that Algorithm \ref{alg:lin} works well on situations where the system dynamics is \emph{unknown}, \emph{unstable}, and \emph{nonlinear}. We use the technique in~\cite{dean2017sample} to learn a nominal plant model $\tf P$     with conservative over-approximation of the modeling error $\ttf{\Delta}$, then use Algorithm \ref{alg:lin} to certify the robustness of the neural network policy interconnected with the learned model. The result is close to that with the knowledge of the true model. See the result in Figure \ref{fig:learned_model}.
    \item[(iv)] We show that Algorithm \ref{alg:lin} can be applied on a discontinuous and non-Lipschitz neural network policy, where the control action is quantized into discrete levels. Lipschitz-based methods~\cite{berkenkamp2017safe, richards2018lyapunov, jin2018stability, jin2018control} cannot be used to certify the boundedness of a closed loop system in this case. See the result in Figure \ref{fig:non_lips_policy}.
\end{itemize}


\paragraph{Experiment setup.} We use proximal policy optimization~\cite{schulman2017proximal} in stable baselines~\cite{stable-baselines} to train a $3$-layer neural network policy for the cart-pole problem in Open-AI gym~\cite{openai-gym}. 
Our neural network has $16$ neurons per hidden layer, ReLU activations, and continuous control output. We note that many trained neural network policies can obtain perfect reward in the training environment over a finite horizon, but will eventually become unstable if we simulate the closed loop dynamics over a longer time horizon. 
To obtain a more stable policy, we modify the reward function to be
\begin{equation}
    r[t] = 2 - \frac{x[t]^\top Q x[t] + u[t]^\top R u[t]}{x_{lim}^\top Q x_{lim} + u_{lim}^\top R u_{lim}}, \nonumber
\end{equation}
which will attempt to minimize the quadratic cost $x[t]^\top Q x[t] + u[t]^\top R u[t]$. 
The policy is trained with $2$M steps. We use the neural network certification framework recently developed in~\cite{weng2018towards, Boopathy2019cnncert, zhang2018crown} but any other neural network certification tools such as ~\cite{katz2017reluplex, Gehr2018AI2, dvijotham2018dual} can be used in Algorithm \ref{alg:lin} as well. The stronger the neural network certification algorithm, the stronger the robustness certificate delivered by Algorithm \ref{alg:lin}. 

For the ease of illustrating the result, we consider an one dimensional persistent perturbation on the pole angle measurement of the cart-pole. The requirement is to certify that a single state (angle of the pole) is within the user-specified limit. Note that Algorithm \ref{alg:lin} can be applied to systems with multi-dimensional perturbations with user-specified requirement on both state, measurement, and control action. In the following experiments, we use \emph{attack level} $w_\infty$ to represent the $\ell_\infty$ norm of the perturbation on pole angle measurement, and use \emph{state deviation limit} $x_{lim}$ to represent the limit on the $\ell_\infty$ norm of the pole angle state. Both the attack level and the state deviation are normalized with respect to $0.014$ degree, which is the maximum attack level that can be certified by the Lipschitz-based robust control approach (Lemma \ref{lemma:L1}). The complete model equations are in Appendix B.


\begin{figure}[t!]
\caption{\textbf{Experiment I}: Compare our methods with traditional approaches. \textbf{(a)}: The safe region certified by our Algorithm \ref{alg:lin} (area below the blue curve) is larger than the safe region certified by the traditional robust control approach (Lemma \ref{lemma:L1}, area below the red curve). \textbf{(b)}: The unsafe region discovered by our attack algorithm \eqref{eq:wattack} (area above the black curve) is much larger than the unsafe region discovered by exhaustive Monte-Carlo simulation (area above the brown curve).}
    \centering
    \begin{subfigure}[t]{0.48\textwidth}
         \includegraphics[width=\textwidth]{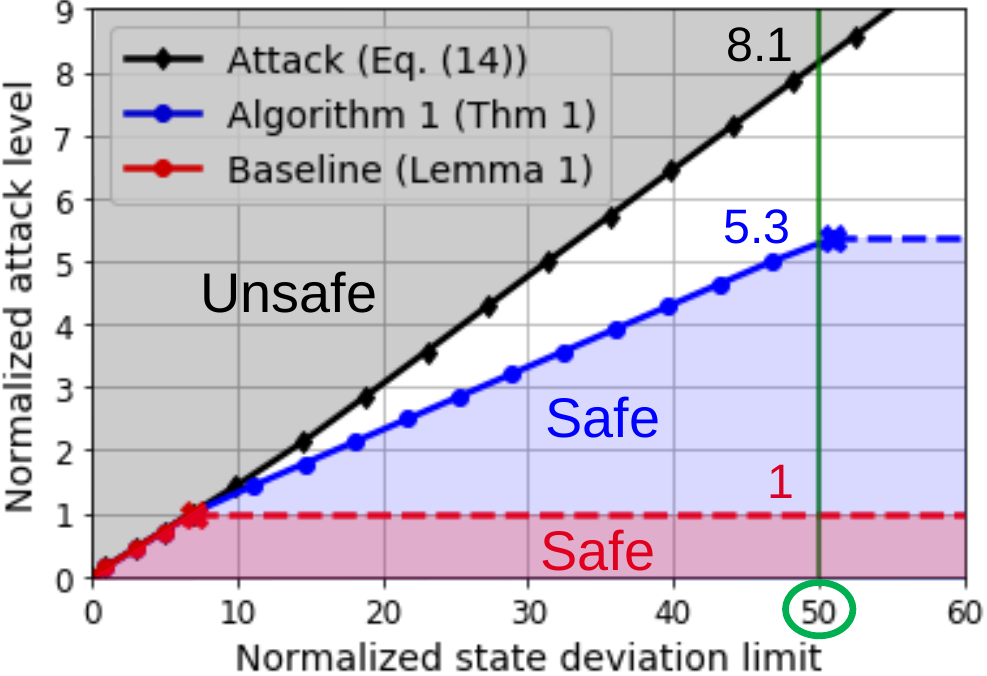}
         \vspace{1pt}
         \caption{Algorithm \ref{alg:lin} can be 5.3$\times$ better than the robust control approach.}
         \label{fig:a}
     \end{subfigure}%
     ~
     \begin{subfigure}[t]{0.48\textwidth}
         \includegraphics[width=\textwidth]{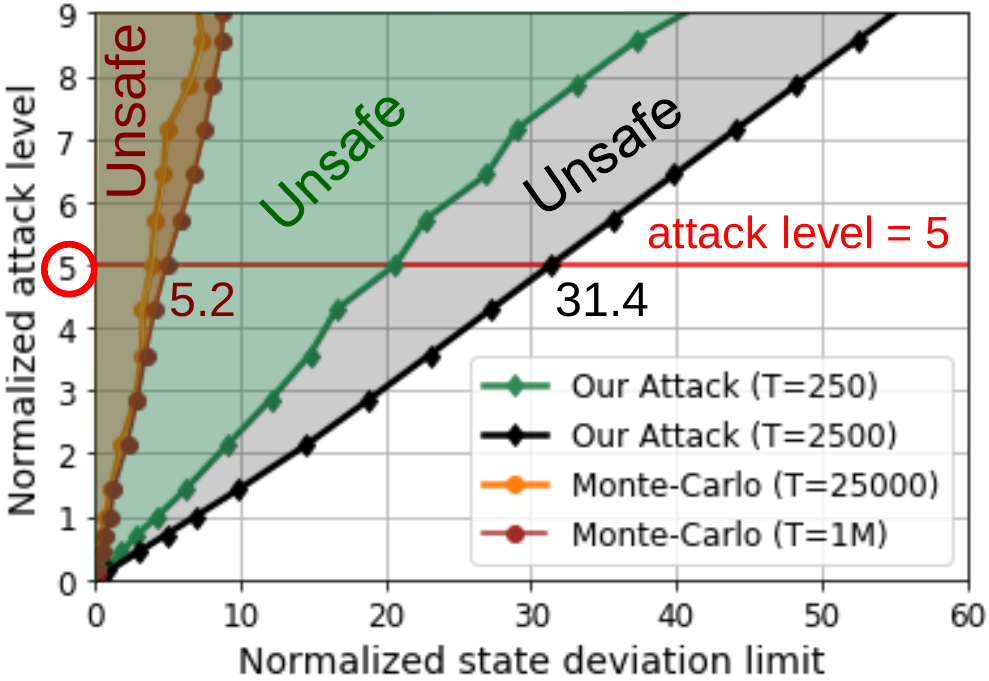}
         \vspace{1pt}
         \caption{Our attack \eqref{eq:wattack} is far more effective than the exhaustive Monte-Carlo simulation.}
         \label{fig:b}
     \end{subfigure}%
\end{figure}


\subsection{Experiment I: Compare with traditional robust control certification and Monte-Carlo based attacks} \label{sec:exp1} 
\paragraph{Tightness of Algorithm \ref{alg:lin}.}
In this experiment, we use the linearized cart-pole model with no uncertainty ($\Gamma_\Delta = 0$) to compare the tightness of Algorithm \ref{alg:lin} (Theorem \ref{thm:LTI}) and the Lipschitz-based robust control baseline (Lemma \ref{lemma:L1}). Given a user-specified state deviation limit $x_{lim}$, we use binary search to call Algorithm \ref{alg:lin} repeatedly to obtain the largest possible $w_\infty^*$ such that the safety requirement $\|\tf x\|_{\ell_\infty} \leq x_{lim}$ is satisfied for any persistent attack $\|\tf w\|_{\ell_\infty} \leq w_\infty^*$. We plot $w_\infty^*$ as a function of $x_{lim}$ as the blue curve in Figure \ref{fig:a}. Clearly, the area below the blue curve is the safe region certified by Theorem \ref{thm:LTI}. Likewise, the area below the red curve is the safe region certified by the traditional robust control theory (Lemma \ref{lemma:L1}), where the local Lipschitz constant is obtained via a sampling-based approach\footnote{We note that several recent works~\cite{fazlyab2019efficient} have proposed algorithms to calculate the Lipschitz constant of a neural network policy. However, since cart-pole is an unstable system, we need to use the technique described in Section \ref{sec:open_unstable} and the Lipschitz constant we need is the \emph{local} Lipschitz constant of the \emph{residual} neural network policy $u_0 = \pi_0(y) = \pi(y) - K_0 y$. We are not aware of any work that can give us a tight bound in this case, thus we use sampling-based approach to obtain a lower bound of the local Lipschitz constant. Strictly speaking, since it is only a lower bound, Lemma \ref{lemma:L1} does not offer a safety certificate. Even though the comparison is in favor of the robust control baseline, we still show that our Algorithm \ref{alg:lin} outperforms the robust control baseline with a huge margin.}. Figure \ref{fig:a} validates our claim in Theorem \ref{thm:tight} -- the safe region certified by Lemma \ref{lemma:L1} is \emph{always} a subset of the safe region certified by Theorem \ref{thm:LTI}. For $x_{lim} = 50$ (the vertical green line in Figure \ref{fig:a}), Algorithm \ref{alg:lin} can certify an attack level that is $5.3$ times larger than the one using Lemma \ref{lemma:L1}. Note that there is a flat dashed line at attack level $= 1$ for the robust control approach. This is because when the attack level is greater than $1$, the conditions \eqref{eq:L1-1} - \eqref{eq:L1-3} of Lemma \ref{lemma:L1} no longer hold. In other words, the maximum attack level that can be certified by Lemma \ref{lemma:L1} is $1$ regardless of the state deviation limit.

Note that in this experiment, the local Lipschitz constant used in Lemma \ref{lemma:L1} is only a lower bound because it is obtained via a sampling-based approach. Therefore, the only reason that can explain the gap between the blue curve and the red curve is that the \emph{neural network certification algorithm~\cite{weng2018towards} gives a much tighter input-output characterization of the neural network policy than the Lipschitz-based method} (which uses $\| \tf u \|_{\ell_\infty} \leq \gamma_\pi \|\tf y \|_{\ell_\infty}$). Other Lipschitz-based methods also have this limitation~\cite{berkenkamp2017safe, richards2018lyapunov, jin2018stability, jin2018control}. In brief, our Algorithm \ref{alg:lin} outperforms the robust control baseline and can achieve up to $5.3$ times better robustness certificate. 


In Figure \ref{fig:a}, the area above the black curve is the unsafe region where our attack algorithm \eqref{eq:wattack} (with $T = 2500$) is able to make the state deviation exceed the user-specified limit. When the attack level is small ($< 1$), we can see from Figure \ref{fig:a} that the black, blue, and red curves overlap. This means that our certification bound is tight in this region. When the attack level increases (or the state deviation limit increases), we start to see a gap between our certification algorithm and our attack algorithm. Future research will attempt to further bridge this gap.

\begin{figure}[t!]
\caption{Comparison between our designed attack with Monte-Carlo based attack. \textbf{(a)}: Our designed attack is injected between $100$s and $150$s. Monte-Carlo based random attack is injected before $100$s and after $150$s. \textbf{(b)}: Our designed attack causes a huge state deviation between $100$s and $150$s compared to the Monte-Carlo based attack.}
    \centering
    \begin{subfigure}[t]{0.45\textwidth}
         \includegraphics[width=\textwidth]{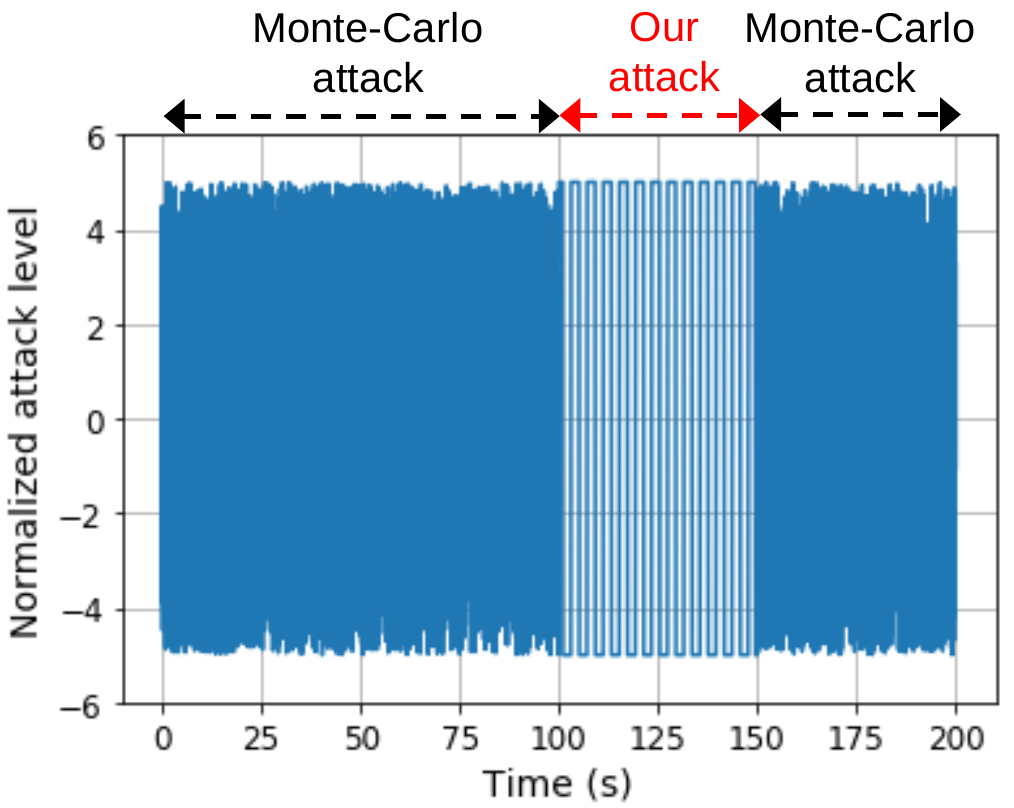}
         \caption{Perturbation sequence as the input}
         \label{fig:atkin}
     \end{subfigure}%
     ~
     \begin{subfigure}[t]{0.45\textwidth}
         \includegraphics[width=\textwidth]{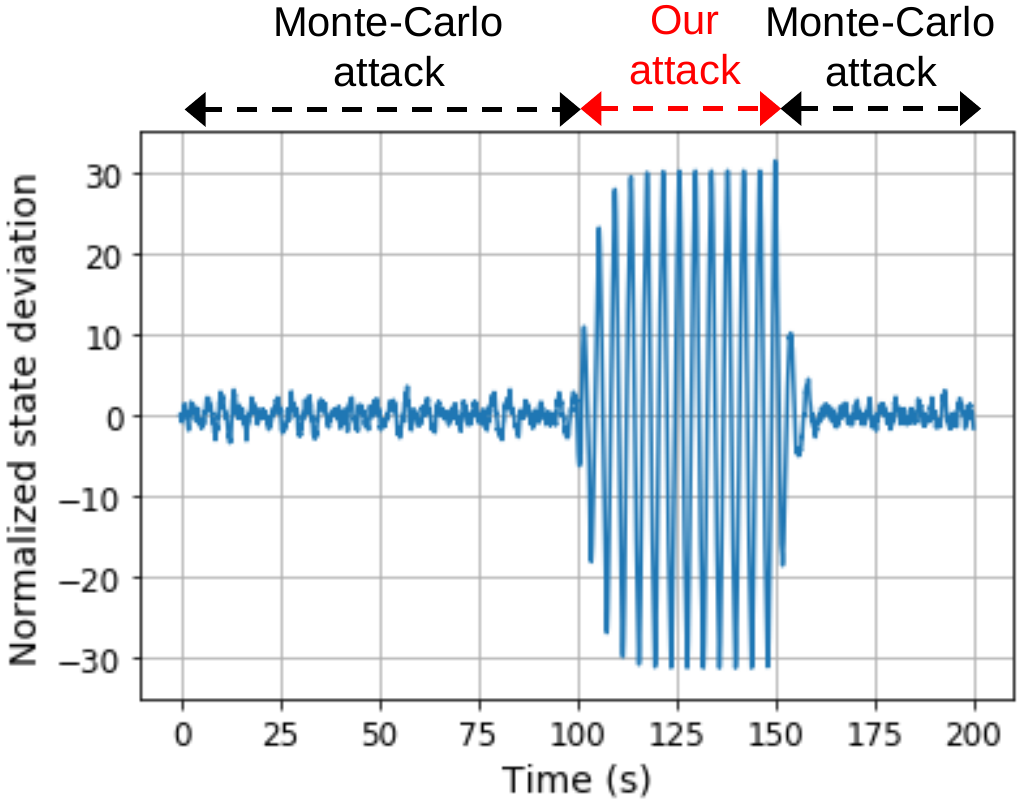}
         \caption{State deviation as the output}
         \label{fig:atkout}
     \end{subfigure}%
\end{figure}

\textbf{Our attack algorithm.} In Figure \ref{fig:b}, we show that our simple attack algorithm \eqref{eq:wattack} is far more effective than an exhaustive Monte-Carlo simulation. Specifically, the area above the black curve is the unsafe region discovered by our attack algorithm \eqref{eq:wattack} (with $T = 2500$), while the area above the brown curve is the unsafe region discovered by Monte-Carlo simulation with $1$ million time steps. There is a huge gap between the unsafe region found by the two methods. This is a strong evidence that the safety of a neural network control system cannot be certified using exhaustive Monte-Carlo simulation. We need to use the certification framework developed in this paper to guarantee the safeness of a neural network control system.

As a concrete example, at attack level $5$ (the horizontal red line in Figure \ref{fig:b}), the mean, standard deviation, and maximum of the state deviation in $1$ million steps Monte-Carlo simulation are $0$, $1.1$, and $5.2$, respectively.
One may conclude that the probability of seeing a state deviation greater than $5.2$ is $10^{-6}$ and falsely claim that the area below the brown region is safe. Unfortunately, this statement is not true when the perturbation sequence is adversarial -- for the same level of perturbation, the maximum state deviation found by our algorithm is $31.4$, which is $29$ standard deviation away. We show the perturbation sequence (input) and the state deviation (output) as functions of time in Figures \ref{fig:atkin} and \ref{fig:atkout}. In Figure \ref{fig:atkin}, we inject our designed attack between $100$s and $150$s (sampling time $= 0.02$s, and thus the number of time step is $T = 2500$). We inject Monte-Carlo based random attack with the same attack level before $100$s and after $150$s. It is clear from Figure \ref{fig:atkout} that the state deviation is significantly higher when we inject our attack. Specifically, our attack algorithm can excite the resonance of the closed loop system, while an exhaustive Monte-Carlo simulation usually cannot. We should also note that here we only consider an \emph{one dimensional} perturbation sequence. Our algorithm \eqref{eq:wattack} can be used to design a multi-dimensional perturbation. The gap between our approach and the Monte-Carlo approach will be even larger in the multi-dimensional setting. 

In summary, Figures \ref{fig:a} and \ref{fig:b} show that both our certification algorithm and attack algorithm are significantly better than the traditional methods.



\begin{figure}[t!]
\caption{\textbf{Experiment II}: Algorithm \ref{alg:lin} works on learned models and non-Lipschitz neural network policies. \textbf{(a)}: Compare Algorithm 1 on various learned models~\cite{dean2017sample} and the true model. (b) Compare certificates on Lipschitz and non-Lipschitz (quantized) neural network policies.}
    \centering
    \begin{subfigure}[t]{0.48\textwidth}
         \includegraphics[width=\textwidth]{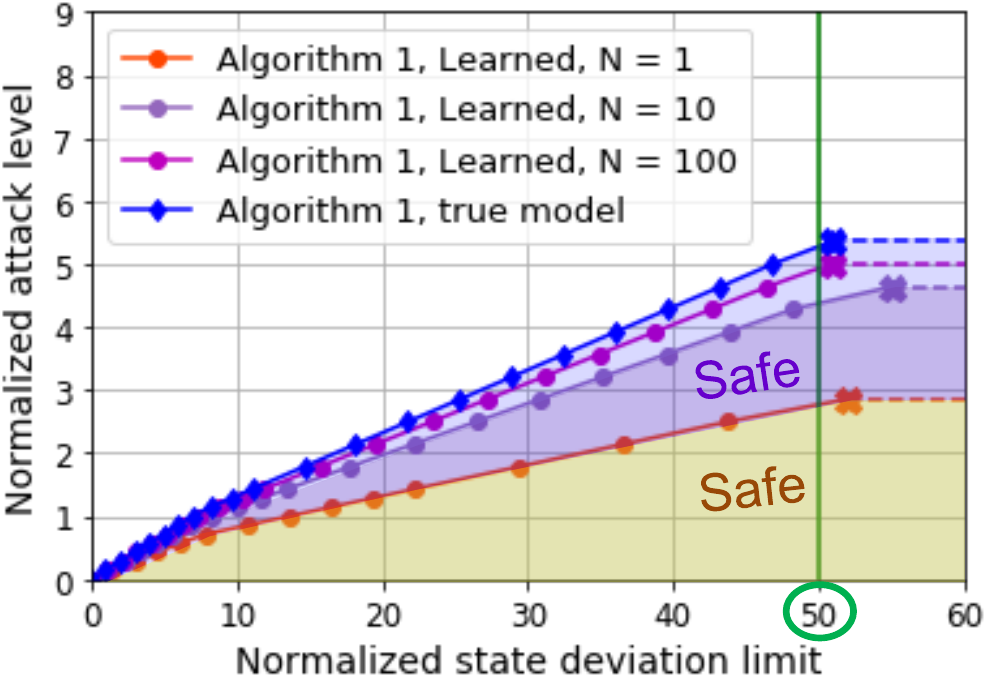}
         \caption{Algorithm \ref{alg:lin} works on learned models.}
         \label{fig:learned_model}
     \end{subfigure}%
     ~
     \begin{subfigure}[t]{0.48\textwidth}
         \includegraphics[width=\textwidth]{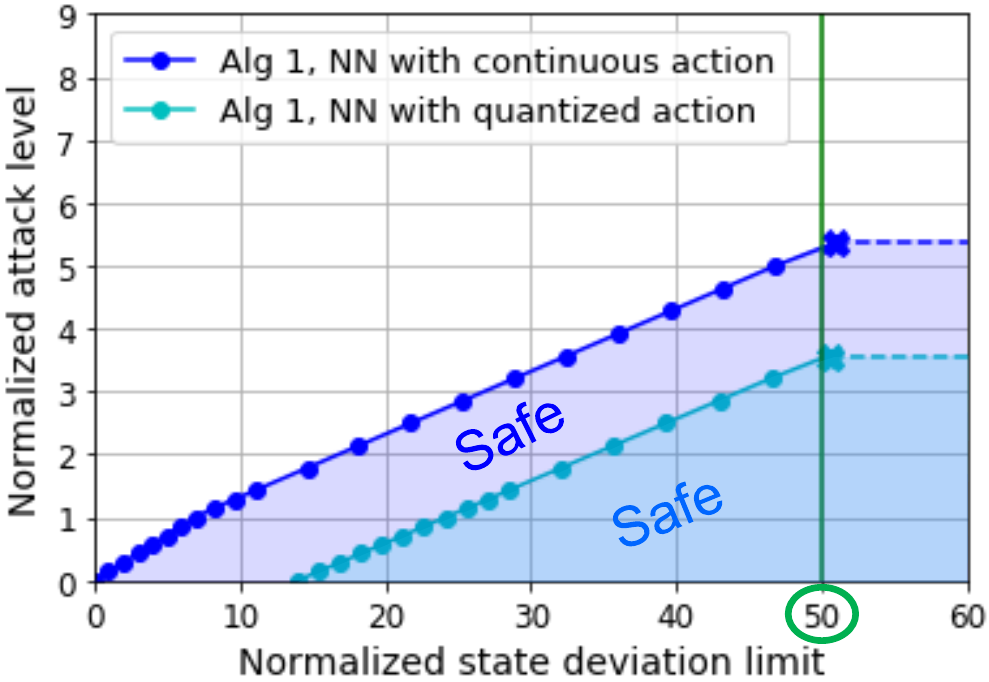}
         \caption{Algorithm \ref{alg:lin} works on non-Lipschitz policies.}
         \label{fig:non_lips_policy}
     \end{subfigure}%
\end{figure}

\subsection{Experiment II: Certificates on learned dynamics and non-Lipschitz policies} \label{sec:exp2} 
\paragraph{Algorithm \ref{alg:lin} on learned models.} In many reinforcement learning applications, the mathematical model of the system dynamics \eqref{eq:NLTI1} - \eqref{eq:NLTI5} is not available in the first place. Therefore, we need to learn the model before performing the certification task. In this experiment, we show that Algorithm \ref{alg:lin} works well on learned models -- even when the underlying system dynamics is \emph{unknown}, \emph{unstable}, and \emph{nonlinear}.

We use the method proposed in~\cite{dean2017sample} to learn the nominal model $\tf P$ and a conservative estimate of the uncertainty $\ttf{\Delta}$. For each episode, we run the simulation using the black-box nonlinear cart-pole model with random control action for $30$ time steps. We run $N$ episodes of simulation to collect the data, then we solve a regression problem to estimate the system matrices for the nominal model $\tf P$. 
To get a conservative bound of the modeling error $\mathbf{\Delta}$, we synthesize $100$ bootstrap samples of $\tf P$ and use the element-wise maximum deviation from the nominal model to over-approximate the model uncertainty $\ttf{\Delta}$. Empirically, there is high probability that the modeling error is bounded by $\ttf{\Delta}$. Beside the bootstrap approach, one can also use other algorithms proposed in~\cite{dean2017sample} to get a stronger mathematical guarantee on the bounds of modeling error if certain technical conditions hold. We include a detailed description of our model learning procedure in Appendix B.

In Figure \ref{fig:learned_model}, we show the certification result of Algorithm \ref{alg:lin} on models learned with different number of episodes $N$. As $N$ increases, the $100$ bootstrap samples become more consistent and thus the size of the modeling error $\mathbf{\Delta}$ shrinks. For $N = 100$, we can see from Figure \ref{fig:learned_model} that the safe region certified by Algorithm \ref{alg:lin} is very close to that with the true linearized model with no uncertainty. This experiment shows that our method indeed works well even when the model of the system dynamics is unknown in the first place. Given a nonlinear and unknown plant model interconnected with a neural network control policy, as long as we can find a nominal LTI plant $\tf P$ and bound the modeling error by $\ttf{\Delta}$, we can use Algorithm \ref{alg:lin} to certify the boundedness of the closed loop system under persistent adversarial perturbation.


\paragraph{Algorithm \ref{alg:lin} on non-Lipschitz neural network policies.} In Figure \ref{fig:non_lips_policy}, we show that Algorithm \ref{alg:lin} can be applied on a non-Lipschitz and discontinuous neural network policy. Specifically, we assume that the output of the neural network is quantized into discrete levels. This is a common setting in many reinforcement learning tasks, in which the control action is chosen from a discrete set. When the neural network output is quantized into discrete levels, the neural network policy becomes discontinuous and the closed loop system may not have a stable equilibrium. Therefore, the Lipschitz-based and the stability based methods~\cite{berkenkamp2017safe, richards2018lyapunov, jin2018stability, jin2018control} cannot be used to certify the boundedness of the closed loop system. On the other hand, our Algorithm \ref{alg:lin} can still be applied in this case -- the area below the light blue curve in Figure \ref{fig:non_lips_policy} is the safe region certified by Algorithm \ref{alg:lin}. This is because we use the static neural network certification tools~\cite{kolter2017provable,weng2018towards,Gehr2018AI2,dvijotham2018dual,Boopathy2019cnncert, royo2019fast} to characterize the input-output relation of a neural network policy, which works even when the policy is not Lipschitz continuous.

\section{Conclusions and Future Works}
Neural networks have shown superior performance in various control tasks in reinforcement learning~\cite{zhang2016learning}, yet people have concerns using them in safety critical systems because it is hard to certify its robustness under adversarial attacks. 
In this paper, we extended the neural network certification tools~\cite{katz2017reluplex, kolter2017provable,weng2018towards,Gehr2018AI2,dvijotham2018dual,Boopathy2019cnncert, royo2019fast} into a dynamic setting and developed an algorithm to certify the robustness of a neural network policy in a feedback control loop under persistent adversarial attack. 
We showed both theoretically and empirically that our method outperforms the traditional Lipschitz-based robust control approach and works on situations where the model dynamics is unknown in the first place. We also developed an $\ell_\infty$ attack algorithm and showed that it can discover the vulnerability of a neural network control system while an exhaustive Monte-Carlo simulation cannot -- this suggested that a mathematical-based certification framework, like the one developed in this paper, is necessary to ensure the safety of a neural network control system. The key idea of our method is combining static neural network certification algorithms with robust control using invariant set principle. This idea is fundamental and can be extended to more general settings in many directions. In the future, we will further tighten the certification bound by exploring more general characterization of the invariant set.




\clearpage

\bibliography{ref}
\bibliographystyle{ieeetr} 

\newpage

\section*{Appendix A: proof of Theorems}
Here we give the proofs for Lemma \ref{lemma:L1} and Theorems \ref{thm:LTI} and \ref{thm:tight}. We also introduce Corollary \ref{lemma:Hinf}, which is the global $\mathcal{H}_\infty$ version of Lemma \ref{lemma:L1}. 
\subsection*{Proof of Lemma \ref{lemma:L1}}
\begin{proof}[Proof of Lemma \ref{lemma:L1}]
We first show that \eqref{eq:L1-1} - \eqref{eq:L1-2} is a sufficient condition for global stability when $\gamma_\pi$ is the global Lipschitz constant. From the assumption, we have
\begin{subequations}
\begin{align}
\| \ttf{\delta} \|_{\ell_\infty} & \leq \gamma_{\Delta} \| \ttf{\alpha} \|_{\ell_\infty} \nonumber\\
&= \gamma_{\Delta} \| \tf{\Phi_{\ttf{\alpha} u}} \tf u + \tf{\Phi_{\ttf{\alpha} w}} \tf w + \tf{\Phi_{\ttf{\alpha} \ttf{\delta}}} \ttf{\delta} \|_{\ell_\infty} \nonumber\\
&\leq \gamma_{\Delta} \| \tf{\Phi_{\ttf{\alpha} \ttf{\delta}}} \|_{\mathcal{L}_1} \| \ttf{\delta} \|_{\ell_\infty} + \gamma_{\Delta} \| \tf{\Phi_{\ttf{\alpha} u}} \tf u + \tf{\Phi_{\ttf{\alpha} w}} \tf w \|_{\ell_\infty}. \nonumber
\end{align}
\end{subequations}
From condition \eqref{eq:L1-1}, we can bound the $\ell_\infty$ norm of $\ttf{\delta}$ as
\begin{equation}
\| \ttf{\delta} \|_{\ell_\infty} \leq \frac{\gamma_{\Delta}}{1- \beta_1} \| \tf{\Phi_{\ttf{\alpha} u}} \tf u + \tf{\Phi_{\ttf{\alpha} w}} \tf w \|_{\ell_\infty}.    
\end{equation}
Then, we have
\begin{subequations}
\begin{align}
\| \tf y \|_{\ell_\infty} & = \| \tf{\Phi_{y \ttf{\delta}}}\ttf{\delta} + \tf{\Phi_{yu}} \tf u + \tf{\Phi_{yw}} \tf w\|_{\ell_\infty} \label{eq:der1}\\
& \leq \| \tf{\Phi_{y \ttf{\delta}}}\|_{\mathcal{L}_1} \|\ttf{\delta}\|_{\ell_\infty} + \| \tf{\Phi_{yu}} \tf u + \tf{\Phi_{yw}} \tf w\|_{\ell_\infty} \label{eq:der2}\\
&\leq \frac{\gamma_{\Delta}}{1 - \beta_1} \| \tf{\Phi_{y \ttf{\delta}}}\|_{\mathcal{L}_1} \| \tf{\Phi_{\ttf{\alpha} u}} \tf u + \tf{\Phi_{\ttf{\alpha} w}} \tf w \|_{\ell_\infty} + \| \tf{\Phi_{yu}} \tf u + \tf{\Phi_{yw}} \tf w\|_{\ell_\infty} \label{eq:der3}\\
&\leq \Big(\|\tf{\Phi_{yu}}\|_{\mathcal{L}_1} + \frac{\gamma_{\Delta}}{1 - \beta_1} \| \tf{\Phi_{y \ttf{\delta}}}\|_{\mathcal{L}_1} \|\tf{\Phi_{\ttf{\alpha} u}}\|_{\mathcal{L}_1} \Big) \| \tf u \|_{\ell_\infty} \nonumber\\
& \quad + \|\tf{\Phi_{yw}} \tf w \|_{\ell_\infty} + \frac{\gamma_{\Delta}}{1- \beta_1} \| \tf{\Phi_{y \ttf{\delta}}}\|_{\mathcal{L}_1} \|\tf{\Phi_{\ttf{\alpha} w}} \tf w \|_{\ell_\infty} \label{eq:der4}\\
&\leq \gamma_\pi \Big(\|\tf{\Phi_{yu}}\|_{\mathcal{L}_1} + \frac{\gamma_{\Delta}}{1- \beta_1} \| \tf{\Phi_{y \ttf{\delta}}}\|_{\mathcal{L}_1} \|\tf{\Phi_{\ttf{\alpha} u}}\|_{\mathcal{L}_1} \Big) \| \tf y \|_{\ell_\infty} \nonumber\\
& \quad + \Big(\|\tf{\Phi_{yw}} \|_{\mathcal{L}_1} + \frac{\gamma_{\Delta}}{1- \beta_1} \| \tf{\Phi_{y \ttf{\delta}}}\|_{\mathcal{L}_1} \|\tf{\Phi_{\ttf{\alpha} w}} \|_{\mathcal{L}_1} \Big) \| \tf w \|_{\ell_\infty} \label{eq:der5}
\end{align}
\end{subequations}
From condition \eqref{eq:L1-2}, we can bound the $\ell_\infty$ norm of $\tf y$ as
\begin{equation}
\| \tf y \|_{\ell_\infty} \leq \frac{1}{1-\beta_2} \Big(\|\tf{\Phi_{yw}} \|_{\mathcal{L}_1} + \frac{\gamma_{\Delta}}{1- \beta_1} \| \tf{\Phi_{y \ttf{\delta}}}\|_{\mathcal{L}_1} \|\tf{\Phi_{\ttf{\alpha} w}} \|_{\mathcal{L}_1} \Big) \| \tf w \|_{\ell_\infty}. \label{eq:yw}
\end{equation}
This shows that the $\ell_\infty$ to $\ell_\infty$ gain from $\tf w$ to $\tf y$ is bounded. We can use similar procedure to show that the $\ell_\infty$ to $\ell_\infty$ gain from $\tf w$ to $\tf u$, $\ttf{\delta}$, $\ttf{\alpha}$ are all bounded. This shows the input-output stability of the closed loop system when the global Lipschitz constant of the neural network policy is $\gamma_\pi$.

Next, we consider the case where $\gamma_\pi$ is valid only over a local region $\|y\|_{\ell_\infty} \leq y_\infty$. 
From \eqref{eq:L1-3}, we know that the right-hand-side of \eqref{eq:yw} is less than or equal to $y_\infty$ given $\| \tf w \|_{\ell_\infty} \leq w_\infty$. Thus, $\tf y$ will never go outside the local region $\|y\|_{\ell_\infty} \leq y_\infty$ where we calculate the Lipschitz constant $\gamma_\pi$ for any valid perturbation \eqref{eq:attack}. This completes the proof. 
\end{proof}

\subsection*{The global $\mathcal{H}_\infty$ version of Lemma \ref{lemma:L1}}
The following Corollary is the global $\mathcal{H}_\infty$ version of Lemma \ref{lemma:L1}. 
\begin{corollary} \label{lemma:Hinf}
Consider a stable LTI plant \eqref{eq:NLTI1} - \eqref{eq:zero} interconnected with a neural network policy \eqref{eq:NLTI4} and a dynamic uncertainty block \eqref{eq:NLTI5} as shown in Figure \ref{fig:model}. Assume that the persistent perturbation $\tf w$ lies in the set given by \eqref{eq:attack}. Suppose that the neural network policy $u = \pi(y)$ has a finite $\ell_2$ to $\ell_2$ gain $\gamma_\pi$ for all $y$, and the uncertainty block $\ttf{\Delta}$ has the property $\| \ttf{\delta} \|_{\ell_2} \leq \gamma_\Delta \|\ttf{\alpha} \|_{\ell_2}$. If the following conditions hold:
\begin{subequations}
\begin{align}
& \beta_1 = \gamma_{\Delta} \| \tf{\Phi_{\ttf{\alpha} \ttf{\delta}}} \|_{\mathcal{H}_\infty} < 1 \label{eq:Hinf-1}\\
& \beta_2 = \gamma_\pi \Big[\|\tf{\Phi_{yu}}\|_{\mathcal{H}_\infty} + \frac{\gamma_{\Delta}}{1 - \beta_1} \| \tf{\Phi_{y \ttf{\delta}}}\|_{\mathcal{H}_\infty} \|\tf{\Phi_{\ttf{\alpha} u}}\|_{\mathcal{H}_\infty} \Big] < 1 \label{eq:Hinf-2}
\end{align}
\end{subequations}
then the closed loop system in Figure \ref{fig:model} is $\ell_2$ to $\ell_2$ input-output stable.
\end{corollary}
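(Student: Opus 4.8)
The plan is to mirror the proof of Lemma~\ref{lemma:L1} term by term, replacing every $\ell_\infty$ signal norm by an $\ell_2$ signal norm and every $\mathcal{L}_1$ system norm by an $\mathcal{H}_\infty$ system norm. The single fact that legitimizes this substitution is that for a stable transfer matrix $\tf{G}$ the $\mathcal{H}_\infty$ norm \emph{is} the $\ell_2$-to-$\ell_2$ induced norm~\cite{zhou1996robust}, so that $\| \tf{G} \tf{x} \|_{\ell_2} \leq \| \tf{G} \|_{\mathcal{H}_\infty} \| \tf{x} \|_{\ell_2}$ for every $\tf{x} \in \ell_2$. This submultiplicativity plays exactly the role that $\| \tf{G} \tf{x} \|_{\ell_\infty} \leq \| \tf{G} \|_{\mathcal{L}_1} \| \tf{x} \|_{\ell_\infty}$ played in the $\mathcal{L}_1$ argument. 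Since this corollary is the \emph{global} version (the gains $\gamma_\pi$, $\gamma_\Delta$ are valid for all signals), no analogue of the local condition~\eqref{eq:L1-3} is required, and I take $\tf{w} \in \ell_2$ as the input whose induced gain to $(\tf{x}, \tf{u}, \tf{y})$ I bound.

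First I would bound $\ttf{\delta}$. Substituting \eqref{eq:FNLTI3} into $\| \ttf{\delta} \|_{\ell_2} \leq \gamma_\Delta \| \ttf{\alpha} \|_{\ell_2}$ and applying the induced-norm inequality to the $\tf{\Phi_{\ttf{\alpha} \ttf{\delta}}} \ttf{\delta}$ term gives
\begin{equation}
\| \ttf{\delta} \|_{\ell_2} \leq \beta_1 \| \ttf{\delta} \|_{\ell_2} + \gamma_\Delta \| \tf{\Phi_{\ttf{\alpha} u}} \tf{u} + \tf{\Phi_{\ttf{\alpha} w}} \tf{w} \|_{\ell_2}, \nonumber
\end{equation}
with $\beta_1 = \gamma_\Delta \| \tf{\Phi_{\ttf{\alpha} \ttf{\delta}}} \|_{\mathcal{H}_\infty}$. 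Condition~\eqref{eq:Hinf-1} lets me move $\beta_1 \| \ttf{\delta} \|_{\ell_2}$ to the left and divide by $1 - \beta_1 > 0$, producing the exact $\ell_2$ analogue of the $\| \ttf{\delta} \|_{\ell_\infty}$ bound. I then substitute this into \eqref{eq:FNLTI2}, apply the triangle inequality and submultiplicativity to each term, and finally replace $\| \tf{u} \|_{\ell_2}$ by $\gamma_\pi \| \tf{y} \|_{\ell_2}$. Collecting the $\| \tf{y} \|_{\ell_2}$ terms reproduces the chain \eqref{eq:der1}--\eqref{eq:der5} with $\mathcal{L}_1 \to \mathcal{H}_\infty$ and $\ell_\infty \to \ell_2$, yielding $\| \tf{y} \|_{\ell_2} \leq \beta_2 \| \tf{y} \|_{\ell_2} + (\text{const}) \, \| \tf{w} \|_{\ell_2}$ with $\beta_2$ as in~\eqref{eq:Hinf-2}.

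Condition~\eqref{eq:Hinf-2} then closes the loop: since $\beta_2 < 1$, I solve for $\| \tf{y} \|_{\ell_2} \leq \frac{1}{1-\beta_2}(\text{const}) \, \| \tf{w} \|_{\ell_2}$, establishing a finite $\ell_2$-to-$\ell_2$ gain from $\tf{w}$ to $\tf{y}$. Propagating this back through $\| \tf{u} \|_{\ell_2} \leq \gamma_\pi \| \tf{y} \|_{\ell_2}$, the $\ttf{\delta}$ bound, and then \eqref{eq:FNLTI1} and \eqref{eq:FNLTI3}, gives finite gains to $\tf{u}$, $\ttf{\delta}$, $\ttf{\alpha}$, and $\tf{x}$ as well, which is the claimed input-output stability.

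I do not expect a genuine obstacle, since the argument is a term-by-term transplant of the Lemma~\ref{lemma:L1} proof across the duality between $\ell_2$ time-domain norms and $\mathcal{H}_\infty$ system norms. The one point requiring care is that the persistent $\ell_\infty$ attack model \eqref{eq:attack} is in general \emph{not} an $\ell_2$ signal, so the conclusion must be read as a statement about the induced $\ell_2$ gain (inputs in $\ell_2$ map to outputs in $\ell_2$) rather than about the specific set \eqref{eq:attack}; the hypothesis that $\tf{w}$ lies in \eqref{eq:attack} is inherited from the lemma template but is not used in the $\mathcal{H}_\infty$ bound. A second minor check is that all nine closed-loop maps are stable---guaranteed by $\rho(A) < 1$---so that each $\mathcal{H}_\infty$ norm is finite and the induced-norm inequality applies throughout.
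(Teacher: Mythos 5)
Your proposal is correct and matches the paper's intent exactly: the paper states this corollary without a separate proof, noting in Section~\ref{sec:open_stable} that one simply replaces the $\mathcal{L}_1$ norm by the $\mathcal{H}_\infty$ norm (the $\ell_2$-to-$\ell_2$ induced norm) in the proof of Lemma~\ref{lemma:L1} and drops the local condition \eqref{eq:L1-3} since the gains are global, which is precisely your term-by-term transplant. Your closing observation is also well taken---the hypothesis that $\tf{w}$ lies in the $\ell_\infty$ set \eqref{eq:attack} is indeed vestigial here, and the conclusion must be read as a finite induced $\ell_2$ gain for inputs $\tf{w} \in \ell_2$, consistent with the unstructured main loop (small gain) theorem the paper cites.
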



\subsection*{Proof of Theorem \ref{thm:LTI}}
The proof of Theorem \ref{thm:LTI} relies on the concept of a positively invariant set, which is defined as follows:
\begin{definition}[from \cite{khalil2002nonlinear}]
A set $\SigSp{M}$ is said to be a \emph{positively invariant set} with respect to the dynamics $x[t+1] = f(x[t])$ if
\begin{equation}
    x[0] \in \SigSp{M} \implies x[t] \in \SigSp{M}, \forall t \geq 0.
\end{equation}
\end{definition}
The proof of Theorem \ref{thm:LTI} is given as follows:
\begin{proof}[Proof of Theorem \ref{thm:LTI}]
We use mathematical induction to show that $\SigSp{I} = \{(y, u, \alpha, \delta) | \,\, |y| \preceq \bar{y}, |u| \preceq \bar{u}, |\alpha| \preceq \bar{\alpha}, |\delta| \preceq \bar{\delta} \}$ is a positively invariant set of the closed loop dynamical system if the three conditions of the theorem are given. From the zero initial condition assumption in \eqref{eq:zero}, we have
\begin{equation}
|y[0]| = |D_w w[0]| \preceq \text{abs}(D_w) \bar{w} \preceq \text{abs}(\tf{\Phi_{yw}}) \bar{w} \preceq \bar{y} \nonumber 
\end{equation}
where the last inequality is from the third condition of the theorem. Then from the first condition of the theorem, we have $|u[0]| \preceq \bar{u}$. Similarly, we have
\begin{equation}
|\alpha[0]| = |D_{\alpha u} u[0] + D_{\alpha w} w[0]| \preceq \text{abs}(D_{\alpha u}) \bar{u} + \text{abs}(D_{\alpha w}) \bar{w} \preceq \text{abs}(\tf{\Phi_{\ttf{\alpha}u}}) \bar{u} + \text{abs}(\tf{\Phi_{\ttf{\alpha}w}}) \bar{w} \preceq \bar{\alpha}. \nonumber
\end{equation}
Then from the second condition of the theorem, we have $|\delta[0]| \preceq \bar{\delta}$. This shows $(y[0], u[0], \alpha[0], \delta[0]) \in \SigSp{I}$. Assume that we have $(y[t], u[t], \alpha[t], \delta[t]) \in \SigSp{I}$ for all $ 0 \leq t < T$. From equation \eqref{eq:FNLTI2}, we have
\begin{subequations}
\begin{align}
|y[T]| &= |\sum_{\tau=0}^\infty \Phi_{yu}[\tau] u[T-\tau] + \Phi_{yw}[\tau] w[T-\tau] + \Phi_{y \delta}[\tau] \delta[T-\tau] \, | \nonumber\\
&\preceq \sum_{\tau=0}^\infty | \Phi_{yu}[\tau] u[T-\tau] | + |\Phi_{yw}[\tau] w[T-\tau]| + |\Phi_{y \delta}[\tau] \delta[T-\tau] \, | \nonumber \\
&\preceq \sum_{\tau=0}^\infty |\Phi_{yu}[\tau]| \bar{u} + |\Phi_{yw}[\tau]| \bar{w} + |\Phi_{y \delta}[\tau]| \bar{\delta} \nonumber \\
&= \text{abs}(\tf{\Phi_{yu}}) \bar{u} + \text{abs}(\tf{\Phi_{yw}}) \bar{w} + \text{abs}(\tf{\Phi_{y \ttf{\delta}}}) \bar{\delta} \nonumber \\
&\preceq \bar{y} \nonumber
\end{align}
\end{subequations}
where the last inequality is from the third condition of the theorem. Similarly, we can derive $|\alpha[T]| \preceq \bar{\alpha}$ from \eqref{eq:FNLTI3}. 
The first two conditions of Theorem \ref{thm:LTI} then imply $|u[T]| \preceq \bar{u}$ and $|\delta[T]| \preceq \bar{\delta}$, and thus we have $(y[T], u[T], \alpha[T], \delta[T]) \in \SigSp{I}$. Using mathematical induction, we conclude that $(y[t], u[t], \alpha[t], \delta[t]) \in \SigSp{I}$ for all $t \geq 0$ and the closed loop feedback signals and state are bounded within the set specified by the theorem.
\end{proof}


\subsection*{Proof of Theorem \ref{thm:tight}}
\begin{proof}[Proof of Theorem \ref{thm:tight}]
We need to show that when \eqref{eq:L1-1} - \eqref{eq:L1-3} and the locally Lipschitz continuous assumption of Lemma \ref{lemma:L1} are satisfied, we can always construct a quadruplet $(\bar{y}, \bar{u}, \bar{\alpha}, \bar{\delta})$ satisfying the three conditions of Theorem \ref{thm:LTI}. 
Consider the following equation:
\begin{equation}
    \begin{bmatrix} y_{ref} \\ \alpha_{ref} \end{bmatrix} = \frac{1}{(1-\beta_1)(1-\beta_2)} \begin{bmatrix} 1 - \gamma_\Delta \| \tf{\Phi_{\ttf{\alpha} \ttf{\delta}}} \|_{\mathcal{L}_1} & \gamma_\Delta \|\tf{\Phi_{y \ttf{\delta}}}\|_{\mathcal{L}_1} \\ \gamma_\pi \|\tf{\Phi_{\ttf{\alpha} u}}\|_{\mathcal{L}_1} & 1 - \gamma_\pi \|\tf{\Phi_{yu}}\|_{\mathcal{L}_1} \end{bmatrix} \begin{bmatrix} \|\tf{\Phi_{yw}} \|_{\mathcal{L}_1} \\ \|\tf{\Phi_{\ttf{\alpha} w}}\|_{\mathcal{L}_1} \end{bmatrix} w_\infty, \label{eq:ext_p2}
\end{equation}
with scalar variables $y_{ref}$ and $\alpha_{ref}$. For any $w_\infty > 0$, we have $y_{ref} > 0$ and $\alpha_{ref} > 0$ because all the elements in \eqref{eq:ext_p2} are positive according to the conditions \eqref{eq:L1-1} - \eqref{eq:L1-2}. In addition, we can show that $y_{ref}$ defined in \eqref{eq:ext_p2} is equivalent to the left-hand-side of \eqref{eq:L1-3}. Therefore, we have $y_{ref} \leq y_\infty$ from \eqref{eq:L1-3} -- this means that $y_{ref}$ is always contained within the region where we calculate the local Lipschitz constant of the neural network policy $\gamma_\pi$. It is then straightforward to verify that $\bar{y} = y_{ref} \mathbf{1}$, $\bar{u} = \gamma_\pi y_{ref} \mathbf{1}$, $\bar{\alpha} = \alpha_{ref} \mathbf{1}$, $\bar{\delta} = \gamma_\Delta \alpha_{ref} \mathbf{1}$ satisfy the first two conditions of Theorem \ref{thm:LTI} given the locally Lipschitz continuous assumption of Lemma \ref{lemma:L1}.

For the third condition of Theorem \ref{thm:LTI}, it can be verified that \eqref{eq:ext_p2} is a solution to the following inequality:
\begin{equation}
    \begin{bmatrix} \|\tf{\Phi_{yw}}\|_{\mathcal{L}_1} \\ \|\tf{\Phi_{\ttf{\alpha} w}} \|_{\mathcal{L}_1} \end{bmatrix} w_\infty \preceq \begin{bmatrix}1 - \gamma_\pi \|\tf{\Phi_{yu}}\|_{\mathcal{L}_1} & -\gamma_\Delta \|\tf{\Phi_{y \ttf{\delta}}}\|_{\mathcal{L}_1} \\ -\gamma_\pi \|\tf{\Phi_{\ttf{\alpha} u}}\|_{\mathcal{L}_1} & 1 - \gamma_\Delta \| \tf{\Phi_{\ttf{\alpha} \ttf{\delta}}} \|_{\mathcal{L}_1}  \end{bmatrix} \begin{bmatrix} y_{ref} \\ \alpha_{ref} \end{bmatrix}, \nonumber
\end{equation}
which can be rearranged into
\begin{equation}
    \begin{bmatrix} \|\tf{\Phi_{yw}}\|_{\mathcal{L}_1} \\ \|\tf{\Phi_{\ttf{\alpha} w}} \|_{\mathcal{L}_1} \end{bmatrix} w_\infty + \begin{bmatrix} \|\tf{\Phi_{yu}}\|_{\mathcal{L}_1} & \|\tf{\Phi_{y \ttf{\delta}}}\|_{\mathcal{L}_1} \\ \|\tf{\Phi_{\ttf{\alpha} u}}\|_{\mathcal{L}_1} & \| \tf{\Phi_{\ttf{\alpha} \ttf{\delta}}} \|_{\mathcal{L}_1}  \end{bmatrix} \begin{bmatrix} \gamma_\pi y_{ref} \\ \gamma_\Delta \alpha_{ref} \end{bmatrix} \preceq \begin{bmatrix} y_{ref} \\ \alpha_{ref} \end{bmatrix}. \label{eq:ext_p1}
\end{equation}
Finally, we note the following inequality
\begin{equation}
    \text{abs}(\tf{\Phi}) \mathbf{1} \preceq \|\tf{\Phi}\|_{\mathcal{L}_1} \mathbf{1}
\end{equation}
from the fact that the $\mathcal{L}_1$ norm is selecting the maximum row sum. Therefore, we have
\begin{subequations}
\begin{align}
& \text{abs}\Big( \begin{bmatrix} \tf{\Phi_{yw}} \\ \tf{\Phi_{\ttf{\alpha} w}} \end{bmatrix}\Big) \bar{w} + \text{abs}\Big( \begin{bmatrix} \tf{\Phi_{yu}} & \tf{\Phi_{y \ttf{\delta}}} \\ \tf{\Phi_{\ttf{\alpha} u}} & \tf{\Phi_{\ttf{\alpha} \ttf{\delta}}}  \end{bmatrix}\Big) \begin{bmatrix} \bar{u} \\ \bar{\delta} \end{bmatrix} \nonumber\\
\preceq \quad & \begin{bmatrix} \|\tf{\Phi_{yw}}\|_{\mathcal{L}_1}\mathbf{1} \\ \|\tf{\Phi_{\ttf{\alpha} w}}\|_{\mathcal{L}_1}\mathbf{1} \end{bmatrix} w_\infty + \text{abs}\Big( \begin{bmatrix} \tf{\Phi_{yu}} & \tf{\Phi_{y \ttf{\delta}}} \\ \tf{\Phi_{\ttf{\alpha} u}} & \tf{\Phi_{\ttf{\alpha} \ttf{\delta}}}  \end{bmatrix}\Big) \begin{bmatrix} \gamma_\pi y_{ref} \mathbf{1} \\ \gamma_\Delta \alpha_{ref} \mathbf{1} \end{bmatrix} \nonumber\\
\preceq \quad & \begin{bmatrix} \|\tf{\Phi_{yw}}\|_{\mathcal{L}_1}\mathbf{1} \\ \|\tf{\Phi_{\ttf{\alpha} w}}\|_{\mathcal{L}_1}\mathbf{1} \end{bmatrix} w_\infty + \begin{bmatrix} \|\tf{\Phi_{yu}}\|_{\mathcal{L}_1}\mathbf{1} & \|\tf{\Phi_{y \ttf{\delta}}}\|_{\mathcal{L}_1}\mathbf{1} \\ \|\tf{\Phi_{\ttf{\alpha} u}}\|_{\mathcal{L}_1}\mathbf{1} & \| \tf{\Phi_{\ttf{\alpha} \ttf{\delta}}} \|_{\mathcal{L}_1}\mathbf{1} \end{bmatrix} \begin{bmatrix} \gamma_\pi y_{ref} \\ \gamma_\Delta \alpha_{ref} \end{bmatrix} \nonumber\\
\preceq \quad & \begin{bmatrix} y_{ref} \mathbf{1} \\ \alpha_{ref} \mathbf{1} \end{bmatrix} \nonumber\\
= \quad &  \begin{bmatrix} \bar{y} \\ \bar{\alpha} \end{bmatrix}. \nonumber
\end{align}
\end{subequations}

We can see that if the conditions of Lemma \ref{lemma:L1} hold, we can always construct a quadruplet $(\bar{y}, \bar{u}, \bar{\alpha}, \bar{\delta})$ satisfying all the three conditions of Theorem \ref{thm:LTI}. The converse is not true. Therefore, Theorem \ref{thm:LTI} can be applied on a strictly larger class of problems than Lemma \ref{lemma:L1}.
\end{proof}

\newpage 

\section*{Appendix B: cart pole model}

We consider a cart-pole problem with $\eta$ being the displacement of the cart and $\theta$ the angle of the pole. The dynamics is given by
\begin{subequations}
\begin{align}
\Ddot{\eta} &= \Big(\frac{4}{3}(M+m)l - ml \text{cos}^2(\theta)\Big)^{-1} (\frac{4}{3}ml^2 \dot{\theta}^2 \text{sin}(\theta) - mgl\text{sin}(\theta)\text{cos}(\theta) + \frac{4}{3}lu )\nonumber\\
\Ddot{\theta} &= \Big(\frac{4}{3}(M+m)l - ml \text{cos}^2(\theta)\Big)^{-1}(-ml\dot{\theta}^2\text{sin}(\theta)\text{cos}(\theta) + (M+m)g\text{sin}(\theta) - \text{cos}(\theta)u). \nonumber
\end{align}
\end{subequations}
We use the default model parameters from stable baselines: $g = 9.8$, $M = 1$, $m = 0.1$, and $l = 0.5$. Using Euler discretization, the nonlinear discrete time cart pole model is given by
\begin{equation}
    \eta[t+1] = \eta[t] + \tau \dot{\eta}[t], \, \dot{\eta}[t+1] = \dot{\eta}[t] + \tau \Ddot{\eta}[t], \, \theta[t+1] = \theta[t] + \tau \dot{\theta}[t], \, \dot{\theta}[t+1] = \dot{\theta}[t] + \tau \Ddot{\theta}[t] \nonumber 
\end{equation}
with sampling time $\tau = 0.02$. 

\subsection*{Linearized model for Experiment I (Section \ref{sec:exp1})}
Define the state vector $x = \begin{bmatrix} \eta & \dot{\eta} & \theta & \dot{\theta}\end{bmatrix}^\top$. The linearized cart-pole model around the origin is given by
\begin{equation}
    x[t+1] = A x[t] + B u[t], \quad y[t] = x[t] + D_w w[t] \nonumber
\end{equation}
with
\begin{equation}
A = \begin{bmatrix}1 & \tau & 0 & 0 \\ 0 & 1 & \frac{-3mg\tau}{4M+m} & 0 \\ 0 & 0 & 1 & \tau \\ 0 & 0 & \frac{3(M+m)g\tau}{(4M+m)l} & 1\end{bmatrix}, \quad B = \begin{bmatrix} 0 \\ \frac{4\tau}{4M+m} \\ 0 \\ \frac{-3\tau}{(4M+m)l}\end{bmatrix}, \quad D_w = \begin{bmatrix} 0 \\ 0 \\ 1 \\ 0\end{bmatrix}. \label{eq:linear_disc}
\end{equation}
Note from $D_w$ that we have an one dimensional perturbation on the pole angle measurement. The requirement is to certify that the actual pole angle is within the user-specified limit, that is,
\begin{equation}
    | x_3[t] | \leq x_{lim}, \quad \forall t \geq 0.
\end{equation}
As explained in Section \ref{sec:exp1}, these assumptions are made for the ease of illustrating the result. Our method can work on much more general model with the form defined in Section \ref{sec:formulation}.

\subsection*{Learned model for Experiment II (Section \ref{sec:exp2})}
When the model equation is unknown in the first place, we can collect the data from the simulator and fit the data to a model. Let $x^{(i, j)}[t]$ be the state vector $x$ at time $t$ for the $i$-th episode from the $j$-th bootstrap sample, for $j = 0, 1, \dots, 100$. For the $j$-th bootstrap run, we solve the following least square problem to obtain the system matrices $A^{(j)}$ and $B^{(j)}$:
\begin{equation}
    \underset{A^{(j)},B^{(j)}}{\text{minimize}} \sum_{i = 1}^N \sum_{t = 0}^{T-1} \| x^{(i, j)}[t+1] - A^{(j)} x^{(i, j)}[t] - B^{(j)} u^{(i, j)}[t] \|_2^2 \label{eq:least_square}
\end{equation}
with $T = 30$, $x$ from the simulator, and $u$ randomly generated. We then find a pair of non-negative matrices $\Delta_A$ and $\Delta_B$ such that
\begin{equation}
    |A^{(j)} - A^{(0)}| \preceq \Delta_A, \quad |B^{(j)} - B^{(0)}| \preceq \Delta_B, \quad \text{for} \quad j = 1, \cdots, 100
\end{equation}
to over-approximate the modeling error of the nominal model $(A^{(0)}, B^{(0)})$. The learned model used by our experiment in Section \ref{sec:exp2} is then given by
\begin{eqnarray}
x[t+1] &=& A^{(0)} x[t] + B^{(0)} u[t] + \delta[t] \nonumber\\
y[t] &=& x[t] + D_w w[t] \nonumber \\
\alpha[t] &=& \begin{bmatrix} I \\ 0\end{bmatrix} x[t] + \begin{bmatrix} 0 \\ I \end{bmatrix} u[t] = \begin{bmatrix} x[t] \\ u[t] \end{bmatrix} \nonumber \\
|\delta[t]| &\preceq& \begin{bmatrix} \Delta_A & \Delta_B \end{bmatrix} |\alpha[t]| = \Gamma_\Delta |\alpha[t]|, \nonumber
\end{eqnarray}
where $\Gamma_\Delta = \begin{bmatrix} \Delta_A & \Delta_B \end{bmatrix}$ is the non-negative matrix used in Algorithm \ref{alg:lin}.

\end{document}